\documentclass[times, review, 10pt]{elsarticle}

\usepackage{amsmath}
\usepackage{amsfonts}
\usepackage{amssymb}
\usepackage{amsthm}
\usepackage{mathrsfs}
\usepackage{graphicx}
\usepackage{booktabs}
\usepackage{bbm}
\usepackage{multirow}
\usepackage{subcaption}
\usepackage{algorithm}
\usepackage{algpseudocode}
\usepackage{lineno}   
\usepackage{setspace}
\usepackage{hyperref}
\usepackage{geometry}
\newtheorem{theorem}{Theorem}

\biboptions{sort&compress}

\journal{Journal Name}  

\begin{document}

\begin{frontmatter}

\title{\textbf{A Heterogeneous Mixture of Experts Framework for Interpretable Machine Learning}}


\author[isi1]{Soham Chatterjee}
\ead{amisohambolchi.chat@gmail.com} \corref{cor1}    
\cortext[cor1]{Corresponding author}

\author[isi1]{Rwitobroto Dey}
\ead{dey.rwitobroto.0612@gmail.com} 

\author[isi2]{Smarajit Bose}
\ead{smarajit@isical.ac.in}  

\affiliation[isi1]{organization={Indian Statistical Institute},
            addressline={203 B.T. Road},
            city={Kolkata},
            postcode={700108},
            country={India}}
\affiliation[isi2]{organization={Interdisciplinary Statistical Research Unit,
                   Indian Statistical Institute},
            addressline={203 B.T. Road},
            city={Kolkata},
            postcode={700108},
            country={India}}

\begin{abstract}
Mixture-of-Experts (MoE) models provide a flexible framework for partitioning complex prediction problems into simpler local learning tasks through an input-dependent gating mechanism. Existing interpretable MoE approaches, such as Mixture of Decision Trees (MoDT), achieve transparency by employing homogeneous decision-tree experts, but this restricts the model to a single inductive bias across all regions of the feature space. We extend the MoDT framework by introducing heterogeneous expert families comprising decision trees, linear support vector machines, and quadratic discriminant analysis under a common probabilistic gating mechanism. To ensure coherent likelihood-based inference, non-probabilistic experts are calibrated to produce conditional class probabilities, allowing parameter estimation within the generalized Expectation-Maximization framework of MoDT. We further establish theoretical monotone ascent guarantees for the proposed heterogeneous gating updates, providing a justification for the optimization procedure. Experiments on a diverse collection of synthetic and real-world benchmark datasets demonstrate that the proposed framework adaptively specializes experts according to local data geometry, yielding interpretable expert assignments while achieving predictive performance competitive with homogeneous MoDT and Random Forests. The proposed approach combines interpretability, adaptive inductive bias selection, and probabilistic coherence within a unified mixture-of-experts framework.
\end{abstract}
\begin{keyword}
mixture of experts \sep interpretable machine learning \sep
decision trees \sep expectation--maximization \sep heterogeneous experts \sep
gating mechanism
\end{keyword}

\end{frontmatter}


\section{Introduction}
\label{sec:introduction}

Mixture-of-experts (MoE) models provide a principled framework for combining multiple predictive models through an input-dependent gating mechanism that dynamically allocates observations to specialized experts. Introduced by Jacobs et al.~\cite{jacobs1991} and subsequently generalized through hierarchical formulations by Jordan and Jacobs~\cite{jordan1994}, MoE architectures decompose complex prediction problems into simpler local subproblems, allowing different experts to specialize in different regions of the input space. This adaptive specialization mechanism offers an appealing compromise between model flexibility and statistical efficiency, and has become an influential paradigm in machine learning, ensemble learning, and probabilistic modeling~\cite{dietterich2000,mclachlan2000,gan2025,zhou2012}.

From a statistical learning perspective, the success of MoE models stems from their ability to exploit heterogeneous structure within the data. Rather than relying on a single global hypothesis class, the gating mechanism allows different experts to dominate in regions where their inductive biases are most appropriate. This idea is closely aligned with a central principle of statistical learning theory: no single model class is uniformly optimal across all learning problems, and predictive performance depends critically on matching model assumptions to the underlying data structure~\cite{hastie2009,isl2021,vapnik1998,shalev2014}.

Despite their flexibility, classical MoE models have often faced criticism regarding interpretability. Many modern MoE architectures employ highly expressive neural-network gating mechanisms and complex expert models whose decision processes are difficult to understand, reflecting a broader challenge in interpretable machine learning \cite{molnar2022,guidotti2018,rudin2022}. This challenge has motivated a growing body of research on interpretable machine learning and interpretable mixtures of experts\cite{ribeiro2016lime,doshivelez2017}. Examples include MOET~\cite{vasic2019}, which employs tree-based experts in reinforcement learning, Interpretable Mixture of Experts~\cite{ismail2022}, which seeks globally understandable expert decompositions, InterpretCC~\cite{swamy2024}, which focuses on user-centric interpretability, and recent intrinsically interpretable MoE architectures~\cite{yang2025}. These works demonstrate increasing interest in preserving the adaptive specialization advantages of MoE models while maintaining transparency and explainability.

Among existing interpretable MoE approaches, the Mixture of Decision Trees (MoDT) framework proposed by Br\"{u}ggenj\"{u}rgen et al.~\cite{bru} is particularly attractive. MoDT combines a linear softmax gating mechanism with shallow decision-tree experts, thereby preserving the probabilistic structure of classical MoE models while providing interpretable rule-based decision paths. By restricting experts to shallow trees, MoDT offers localized explanations and transparent expert assignments, making it especially suitable for scientific and tabular-data applications where interpretability is a primary concern.

However, despite these advantages, MoDT inherits an important structural limitation from its design: all experts belong to the same hypothesis class. Consequently, every region of the feature space is modeled using decision trees, regardless of whether an alternative model family would provide a more natural representation. While decision trees are highly interpretable and flexible, their axis-aligned partition structure may be inefficient for representing smooth linear separators, elliptical decision boundaries, anisotropic Gaussian class-conditionals, or other geometric structures that arise naturally in many real-world datasets~\cite{breiman1984,piltaver2016,quinlan1993}.

From a statistical perspective, this restriction may be viewed as a form of local model mismatch. Although the gating mechanism partitions the input space adaptively, the set of admissible local models remains globally fixed. Consequently, adaptation occurs only through parameter variation within a single hypothesis class rather than through the selection of fundamentally different inductive biases. In practice, this often necessitates increasingly complex tree structures to approximate decision boundaries that could be represented more naturally by alternative model classes such as support vector machines~\cite{cortes1995} or quadratic discriminant models~\cite{hastie2009}.

This observation motivates the central idea of the present work. Many datasets exhibit heterogeneous local structure in which different regions are more naturally represented by different model families. For example, one region may be approximately linearly separable and therefore well modeled by a support vector machine, while another may exhibit covariance-driven quadratic boundaries better captured by quadratic discriminant analysis, and yet another may require rule-based axis-aligned partitions naturally represented by decision trees. The original MoE framework already provides a principled mechanism for routing observations toward the most appropriate expert through responsibility-based gating~\cite{jacobs1991,jordan1994}. Extending this framework to allow experts with heterogeneous inductive biases is therefore both natural and statistically appealing.

Building on the interpretability of MoDT and the probabilistic foundations of classical MoE models, we propose a heterogeneous mixture-of-experts framework that integrates decision trees, linear support vector machines, and quadratic discriminant analysis experts under a common linear softmax gating mechanism. To ensure coherent probabilistic inference, expert outputs are represented as conditional class probabilities and parameter estimation is performed through a generalized Expectation--Maximization procedure~\cite{dempster1977,bishop2006}. Unlike existing interpretable MoE approaches that rely on a single expert family, the proposed framework enables adaptive region-specific inductive bias selection while preserving transparency in both expert assignment and prediction.

The contributions of this work are threefold. First, we introduce an interpretable heterogeneous mixture-of-experts architecture that combines complementary expert families within a unified probabilistic framework. Second, we develop a generalized EM estimation procedure together with theoretical results establishing monotone ascent properties of the proposed gating updates. Third, through experiments on a collection of real-world and synthetic datasets, we demonstrate that the proposed framework achieves predictive performance comparable to strong baselines while providing meaningful expert specialization patterns that reveal how different inductive biases are selected across regions of the feature space.

\section{Problem Formulation and Discussion}
\label{sec:problem}

We consider a supervised multi-class classification problem in which we observe independent
and identically distributed samples
\[
  \mathcal{D} = \{(\boldsymbol{x}_i, y_i)\}_{i=1}^n
\]
drawn from an unknown joint distribution $P(\boldsymbol{X}, Y)$ defined on the product
space $\mathcal{X} \times \mathcal{Y}$. Here $\mathcal{X} \subset \mathbb{R}^{p+1}$ denotes
the feature space, where each $\boldsymbol{x}_i$ is augmented with an intercept term, and
$\mathcal{Y} = \{1,\dots,C\}$ denotes a finite label space.

Our objective is to construct an interpretable estimator of the conditional distribution
$P(Y \mid \boldsymbol{x})$ that generalizes to unseen data. We assume that the true
conditional mechanism exhibits \emph{region-specific structural heterogeneity}, in the sense
that no single hypothesis class provides a globally parsimonious representation of the Bayes
decision boundary.

Specifically, we posit the existence of a latent categorical variable $Z \in \{1,\dots,e\}$
such that the conditional distribution admits the representation
\begin{equation}
  P(Y = y \mid \boldsymbol{x})
  = \sum_{j=1}^{e} P(Z=j \mid \boldsymbol{x})\, P(Y=y \mid \boldsymbol{x}, Z=j).
  \label{eq:mixture_representation}
\end{equation}
This formulation induces a soft partition of the input space, allowing distinct conditional
mechanisms to dominate in different regions.

Classical MoE models~\cite{jacobs1991} operationalize~\eqref{eq:mixture_representation}
using parametric gating functions and homogeneous expert families. The MoDT framework
utilizes a linear gating function and constrains experts to interpretable tree-based
components, thereby enhancing transparency. However, restricting all experts to a single
hypothesis class imposes a common inductive bias across regions of the feature space. When
the underlying conditional structure deviates from axis-aligned partitions, this constraint
may induce regional model misspecification, forcing adaptation through increased model
complexity rather than appropriate bias selection.

The statistical problem addressed in this work is therefore the following: can one construct
a coherent and interpretable mixture estimator that permits heterogeneous expert families
while preserving likelihood-based inference and stable specialization behaviour?

To this end, we define a heterogeneous model space
\[
  \mathcal{F} = \bigcup_{j=1}^{e} \mathcal{F}_j,
\]
where each $\mathcal{F}_j$ denotes a distinct hypothesis class with complementary inductive
bias. In the present work, the expert families include: (i)~shallow decision trees for
rule-based axis-aligned structure, (ii)~linear support vector machines for
maximum-margin hyperplanar separation, and (iii)~quadratic discriminant models for
class-conditional covariance structure.

Let $\boldsymbol{\Theta}$ denote the parameters of a linear softmax gating mechanism
\[
  g_j(\boldsymbol{x};\boldsymbol{\Theta})
  = \frac{\exp(\boldsymbol{x}^\top \boldsymbol{\theta}_j)}
         {\sum_{k=1}^{e} \exp(\boldsymbol{x}^\top \boldsymbol{\theta}_k)},
\]
and let $\boldsymbol{\Psi} = (\psi_1,\dots,\psi_e)$ denote the collection of
expert-specific parameters. The observed-data likelihood under the heterogeneous mixture
model is
\begin{equation}
  \ell(\boldsymbol{\Theta},\boldsymbol{\Psi})
  = \sum_{i=1}^{n} \log\!\left(
      \sum_{j=1}^{e} g_j(\boldsymbol{x}_i;\boldsymbol{\Theta})\,
      p_j(y_i \mid \boldsymbol{x}_i; \psi_j)
    \right),
  \label{eq:observed_likelihood}
\end{equation}
where $p_j(y \mid \boldsymbol{x};\psi_j)$ denotes the conditional density associated with
expert~$j$.

For expert classes that do not inherently produce probabilistic outputs (e.g., margin-based
linear support vector machines), we employ calibrated mappings from decision scores to
conditional class probabilities (e.g., via \emph{Platt scaling})\cite{platt1999,niculescu2005,guo2017} to ensure that
\eqref{eq:observed_likelihood} defines a coherent likelihood function. This guarantees
compatibility with likelihood-based estimation consistent with modern probabilistic machine learning formulations \cite{murphy2022} and preserves probabilistic interpretability
of the mixture.

Parameter estimation proceeds via a generalized Expectation Maximization~\cite{bishop2006}
procedure that iteratively maximizes~\eqref{eq:observed_likelihood}. The resulting estimator
$(\hat{\boldsymbol{\Theta}}, \hat{\boldsymbol{\Psi}})$ is designed to achieve
region-adaptive inductive bias selection: in regions where a single hypothesis class is
sufficient, the gating mechanism may concentrate its mass on that expert, whereas in
structurally heterogeneous regimes, responsibility is distributed across complementary
experts.

The goal is thus to reconcile interpretability, coherence, and heterogeneous inductive bias
within a unified likelihood-based framework.

\section{Model Specification}
\label{sec:model}

Let $\boldsymbol{x}_1,\dots,\boldsymbol{x}_n \in \mathbb{R}^{p+1}$ denote the observed
covariate vectors, treated as fixed and non-random. Each vector is written as
$\boldsymbol{x}_i=(1,\boldsymbol{z}_i^\top)^\top$, where the first coordinate corresponds
to an intercept term and $\boldsymbol{z}_i \in \mathbb{R}^p$ denotes the non-intercept
covariates. Define the design matrix
\[
  \boldsymbol{X}
  = \begin{bmatrix} \boldsymbol{x}_1^\top \\ \vdots \\ \boldsymbol{x}_n^\top \end{bmatrix}
  \in \mathbb{R}^{n\times(p+1)}.
\]
Throughout this section we assume that
\[
  \operatorname{rank}(\boldsymbol{X}) = p+1.
\]
This condition is imposed to ensure injectivity of the normalized softmax gating
parametrization and to avoid degeneracy arising from linear dependence among covariates.

\subsection{Parameter Space}

We assume that the heterogeneous mixture model is indexed by
$(\boldsymbol\Theta,\boldsymbol\Psi)$, where
$\boldsymbol\Theta=(\boldsymbol\theta_1,\dots,\boldsymbol\theta_e)$ denotes the gating
parameters and $\boldsymbol\Psi=(\boldsymbol\psi_1,\dots,\boldsymbol\psi_e)$ denotes the
collection of expert-specific parameters.

The gating parameters satisfy
\[
  \boldsymbol\theta_j \in \mathbb{R}^{p+1}, \qquad j=1,\dots,e.
\]
As established below in Section~\ref{sec:estimation}, the softmax parametrization exhibits
an additive redundancy; after normalization, the free gating parameter space is identified
with $\mathbb{R}^{(p+1)(e-1)}$.

Further, it is assumed that each expert~$j$ belongs to a finite-dimensional parametric
family indexed by
\[
  \boldsymbol\psi_j \in \boldsymbol\Psi_j \subset \mathbb{R}^{d_j},
\]
where $d_j<\infty$ denotes the dimension of the $j$th expert model. This finite-dimensional
Euclidean representation reflects the parametric nature of the expert families. The
assumption $\boldsymbol\psi_j \in \mathbb{R}^{d_j}$ ensures that each expert family admits
a smooth coordinate representation suitable for likelihood-based analysis.

We assume that the overall parameter space
\[
  \boldsymbol\Omega
  = \bigl\{ (\boldsymbol\Theta,\boldsymbol\Psi) :
      \boldsymbol\theta_j \in \mathbb{R}^{p+1},\;
      \boldsymbol\psi_j \in \boldsymbol\Psi_j \bigr\}
\]
is compact. Combined with continuity of the log-likelihood, this guarantees existence of
maximizers of the observed-data likelihood, and that the target parameter lies in the
interior of $\boldsymbol\Omega$.

\section{Estimation via Expectation--Maximization}
\label{sec:estimation}

In Section~\ref{sec:problem}, the heterogeneous mixture model was specified through the
conditional density
\begin{equation}
  p(y_i \mid \boldsymbol x_i;\boldsymbol\Theta,\boldsymbol\Psi)
  = \sum_{j=1}^{e}
      g_j(\boldsymbol x_i;\boldsymbol\Theta)\,
      p_j(y_i \mid \boldsymbol x_i;\boldsymbol\psi_j),
\end{equation}
where $g_j(\boldsymbol x_i;\boldsymbol\Theta)$ denotes the softmax gating probability and
$p_j(y_i\mid x_i;\psi_j)$ denotes the conditional density associated with expert~$j$.

Direct maximization of the observed-data log-likelihood
\begin{equation}
  \ell(\boldsymbol\Theta,\boldsymbol\Psi)
  = \sum_{i=1}^{n}
      \log\!\left(
        \sum_{j=1}^{e}
          g_j(\boldsymbol x_i;\boldsymbol\Theta)\,
          p_j(y_i\mid \boldsymbol x_i;\boldsymbol\psi_j)
      \right)
\end{equation}
is analytically intractable due to the coupling between the gating and expert parameters
induced by the logarithm of the mixture sum.

To facilitate optimization, we introduce latent allocation variables
\[
  Z_i \in \{1,\ldots,e\},
\]
where $Z_i=j$ indicates that observation~$i$ is generated by expert~$j$. This
latent-variable representation naturally leads to an Expectation--Maximization (EM)
framework \cite{dempster1977,mclachlan2008}.

The proposed estimation procedure combines:
\begin{enumerate}
  \item a fully probabilistic E-step derived from the heterogeneous mixture likelihood, and
  \item a computationally efficient regression-based generalized M-step inspired by the
        Mixture of Decision Trees framework.
\end{enumerate}
The resulting algorithm therefore constitutes a generalized Expectation--Maximization (GEM)
procedure.

\subsection{Complete-Data Representation}

If the latent variables $Z_1,\ldots,Z_n$ were observed, the complete-data likelihood would
factorize as
\[
  L_c(\boldsymbol\Theta,\boldsymbol\Psi)
  = \prod_{i=1}^{n} \prod_{j=1}^{e}
      \Bigl[
        g_j(\boldsymbol x_i;\boldsymbol\Theta)\,
        p_j(y_i\mid \boldsymbol x_i;\boldsymbol\psi_j)
      \Bigr]^{\mathbbm{1}(Z_i=j)},
\]
where $\mathbbm{1}(\cdot)$ denotes the indicator function.

Taking logarithms yields the complete-data log-likelihood:
\begin{equation}
  \ell_c(\boldsymbol\Theta,\boldsymbol\Psi)
  = \sum_{i=1}^{n} \sum_{j=1}^{e}
      \mathbbm{1}(Z_i=j)
      \Bigl[
        \log g_j(\boldsymbol x_i;\boldsymbol\Theta)
        + \log p_j(y_i\mid \boldsymbol x_i;\boldsymbol\psi_j)
      \Bigr].
\end{equation}

Because the latent allocations are unobserved, EM proceeds by iteratively replacing the
indicator variables by their posterior conditional expectations.

\subsection{E-Step}

At iteration~$t$, let $(\boldsymbol\Theta^{(t)},\boldsymbol\Psi^{(t)})$ denote the current
parameter estimates. The E-step computes the posterior distribution of the latent allocation
variables conditioned on the observed data:
\[
  \tau_{ij}^{(t)}
  = P(Z_i=j \mid y_i,\boldsymbol x_i;\boldsymbol\Theta^{(t)},\boldsymbol\Psi^{(t)}),
\]
where $i$ ranges from $1$ to $n$ and $j$ varies from $1$ to $e$.

Applying Bayes' theorem yields
\begin{equation}
  \tau_{ij}^{(t)}
  = \frac{
      g_j(\boldsymbol x_i;\boldsymbol\Theta^{(t)})\,
      p_j(y_i\mid \boldsymbol x_i;\boldsymbol\psi_j^{(t)})
    }{
      \sum_{k=1}^{e}
        g_k(\boldsymbol x_i;\boldsymbol\Theta^{(t)})\,
        p_k(y_i\mid \boldsymbol x_i;\boldsymbol\psi_k^{(t)})
    }.
\end{equation}

The quantities $\tau_{ij}^{(t)}$ are referred to as \emph{posterior responsibilities} and
satisfy
\[
  0 \leq \tau_{ij}^{(t)} \leq 1, \qquad \sum_{j=1}^{e}\tau_{ij}^{(t)}=1.
\]

Unlike heuristic confidence-based responsibility assignments, these posterior probabilities
arise directly from the latent-variable mixture likelihood and therefore possess a rigorous
probabilistic interpretation.

Define the posterior responsibility matrix
\[
  \boldsymbol T^{(t)}
  = \bigl(\tau_{ij}^{(t)}\bigr)_{i,j} \in \mathbb{R}^{n\times e}.
\]

The auxiliary EM functional is then given by
\[
  Q\!\bigl(\boldsymbol\Theta,\boldsymbol\Psi
    \mid \boldsymbol\Theta^{(t)},\boldsymbol\Psi^{(t)}\bigr)
  = \mathbb{E}\!\left[
      \ell_c(\boldsymbol\Theta,\boldsymbol\Psi)
      \mid \mathcal{D};\,
      \boldsymbol\Theta^{(t)},\boldsymbol\Psi^{(t)}
    \right].
\]

Substituting the complete-data log-likelihood yields
\begin{equation}
  Q(\boldsymbol\Theta,\boldsymbol\Psi)
  = \sum_{i=1}^{n} \sum_{j=1}^{e}
      \tau_{ij}^{(t)}
      \Bigl[
        \log g_j(\boldsymbol x_i;\boldsymbol\Theta)
        + \log p_j(y_i\mid \boldsymbol x_i;\boldsymbol\psi_j)
      \Bigr].
\end{equation}

The auxiliary function separates additively into gating-specific and expert-specific
components:
\[
  Q(\boldsymbol\Theta,\boldsymbol\Psi)
  = Q_{\mathrm{gate}}(\boldsymbol\Theta)
    + \sum_{j=1}^{e} Q_j(\boldsymbol\psi_j),
\]
where
\begin{equation}
  Q_{\mathrm{gate}}(\boldsymbol\Theta)
  = \sum_{i=1}^{n} \sum_{j=1}^{e}
      \tau_{ij}^{(t)} \log g_j(\boldsymbol x_i;\boldsymbol\Theta),
\end{equation}
and
\begin{equation}
  Q_j(\boldsymbol\psi_j)
  = \sum_{i=1}^{n}
      \tau_{ij}^{(t)} \log p_j(y_i\mid \boldsymbol x_i;\boldsymbol\psi_j).
\end{equation}

This separability permits independent optimization of the experts and the gating mechanism.

Since exact maximization of the gating likelihood requires iterative multinomial logistic
optimization, we instead adopt the regression-based surrogate update proposed in
MoDT~\cite{bru}. Consequently, the resulting procedure is a generalized EM (GEM) algorithm
rather than an exact EM algorithm.

\subsection{M-Step}

In the M-step, the auxiliary function is optimized with respect to the parameters
$(\boldsymbol\Theta,\boldsymbol\Psi)$.

\subsubsection{Expert Updates}

For each expert~$j$, the optimization problem reduces to
\begin{equation}
  \widehat{\boldsymbol\psi}_j^{(t+1)}
  = \arg\max_{\boldsymbol\psi_j\in\boldsymbol\Psi_j}
      \sum_{i=1}^{n}
        \tau_{ij}^{(t)} \log p_j(y_i\mid \boldsymbol x_i;\boldsymbol\psi_j).
\end{equation}

Thus, each expert update corresponds to a weighted maximum likelihood estimation problem
with observation weights given by the posterior responsibilities.

Accordingly:
\begin{itemize}
  \item decision tree experts are trained using weighted impurity minimization,
  \item support vector machine experts are trained using weighted margin optimization
        together with probability calibration,
  \item quadratic discriminant experts are estimated using weighted class means and
        covariance matrices.
\end{itemize}

Since the expert objectives are mutually independent, all experts may be optimized
separately and in parallel.

\subsubsection{Regression-Based Gating Update}

The gating objective is
\begin{equation}
  Q_{\mathrm{gate}}(\boldsymbol\Theta)
  = \sum_{i=1}^{n} \sum_{j=1}^{e}
      \tau_{ij}^{(t)} \log g_j(x_i;\boldsymbol\Theta).
\end{equation}

Exact maximization of this objective corresponds to multinomial logistic regression with
soft labels $\tau_{ij}^{(t)}$. Although the resulting optimization problem is convex, it
does not admit a closed-form solution and typically requires iterative numerical procedures
such as IRLS or quasi-Newton optimization at every EM iteration.

To avoid this repeated high-dimensional optimization, we follow the computational strategy
adopted in the MoDT~\cite{bru} framework and replace the exact multinomial likelihood
maximization by a least-squares surrogate update. Specifically, rather than directly
optimizing the softmax log-likelihood, we model the discrepancy between the posterior
responsibilities and the current gating assignments through a linear regression
approximation.

This surrogate formulation yields a computationally efficient closed-form update while
preserving the directional alignment between the posterior allocation structure obtained in
the E-step and the gating probabilities induced by the current parameter estimates.

Define the current gating probability matrix
\[
  \boldsymbol G^{(t)}
  = \bigl(g_j(\boldsymbol x_i;\boldsymbol\Theta^{(t)})\bigr)_{i,j}
  \in \mathbb{R}^{n\times e}.
\]

We define the residual responsibility matrix
\begin{equation}
  \boldsymbol R^{(t)} = \boldsymbol T^{(t)} - \boldsymbol G^{(t)}.
\end{equation}

The matrix $\boldsymbol{R}^{(t)}$ measures the discrepancy between the posterior expert
allocations implied by the complete-data likelihood, and the current gating allocations
induced by the softmax gating network. Positive entries $R_{ij}^{(t)} > 0$ indicate
under-allocation of observation~$i$ to expert~$j$, while negative entries indicate
over-allocation.

Let
\[
  \boldsymbol X
  = \begin{bmatrix} \boldsymbol x_1^\top \\ \vdots \\ \boldsymbol x_n^\top \end{bmatrix}
  \in \mathbb{R}^{n\times (p+1)}
\]
denote the design matrix. We seek a linear perturbation of the gating parameters that
approximately aligns the gating probabilities with the posterior responsibilities. As in
MoDT~\cite{bru}, this is achieved by solving the multivariate least-squares problem
\begin{equation}
  \widehat{\boldsymbol B}^{(t)}
  = \arg\min_{\boldsymbol B\in\mathbb{R}^{(p+1)\times e}}
      \left\|
        \boldsymbol R^{(t)} - \boldsymbol X \boldsymbol B
      \right\|_{\mathcal{F}}^2,
\end{equation}
where $\|\cdot\|_{\mathcal{F}}$ denotes the Frobenius norm. Equivalently,
\[
  \widehat{\boldsymbol B}^{(t)}
  = \arg\min_B
      \sum_{i=1}^{n} \sum_{j=1}^{e}
        \left(
          \tau_{ij}^{(t)}
          - g_j(\boldsymbol x_i;\boldsymbol\Theta^{(t)})
          - x_i^\top \boldsymbol\beta_j
        \right)^2.
\]

Assuming $\boldsymbol X^\top \boldsymbol X$ is nonsingular, the minimizer admits the
closed-form expression
\begin{equation}
  \widehat{\boldsymbol B}^{(t)}
  = (\boldsymbol X^\top \boldsymbol X)^{-1} \boldsymbol X^\top \boldsymbol R^{(t)}.
\end{equation}

The gating parameters are then updated according to
\begin{equation}
  \boldsymbol \Theta^{(t+1)}
  = \boldsymbol \Theta^{(t)} + \gamma_t \widehat{\boldsymbol B}^{(t)},
\end{equation}
where $\gamma_t>0$ denotes a learning-rate parameter. This update may be interpreted as a
first-order correction step that moves the gating probabilities toward the posterior
allocation structure computed in the E-step.

\subsection{Local Generalized EM Interpretation}

The proposed gating update does not maximize $Q_{\mathrm{gate}}(\boldsymbol{\Theta})$
exactly. Instead, it performs an approximate ascent step based on the discrepancy between
the posterior responsibilities obtained from the E-step and the current gating
probabilities. Consequently, the resulting optimization procedure constitutes a generalized
Expectation--Maximization (GEM) algorithm rather than an exact EM algorithm.

The following theorem establishes that the proposed least-squares gating update defines a
monotone ascent step for the gating auxiliary objective under suitable learning-rate
conditions.

\begin{theorem}
\label{thm:ascent}
Suppose that:
\begin{enumerate}
  \item the design matrix satisfies $\mathrm{rank}(\boldsymbol{X})=p+1$;
  \item the gating auxiliary function
        \[
          Q_{\mathrm{gate}}(\boldsymbol{\Theta})
          = \sum_{i=1}^{n} \sum_{j=1}^{e}
              \tau_{ij}^{(t)}
              \log g_j(\boldsymbol{x}_i;\boldsymbol{\Theta})
        \]
        is twice continuously differentiable.
\end{enumerate}
Define $\boldsymbol{R}^{(t)} = \boldsymbol{T}^{(t)} - \boldsymbol{G}^{(t)}$ and let
\[
  \widehat{\boldsymbol{B}}^{(t)}
  = (\boldsymbol{X}^\top\boldsymbol{X})^{-1} \boldsymbol{X}^\top \boldsymbol{R}^{(t)}.
\]
Further define
\[
  A_t
  = \bigl\langle
      \nabla_{\boldsymbol{\Theta}} Q_{\mathrm{gate}}(\boldsymbol{\Theta}^{(t)}),\,
      \widehat{\boldsymbol{B}}^{(t)}
    \bigr\rangle
\]
and
\[
  C_t
  = \bigl\langle
      \widehat{\boldsymbol{B}}^{(t)},\,
      \nabla^2_{\boldsymbol{\Theta}} Q_{\mathrm{gate}}(\widetilde{\boldsymbol{\Theta}}^{(t)})\,
      \widehat{\boldsymbol{B}}^{(t)}
    \bigr\rangle,
\]
where $\widetilde{\boldsymbol{\Theta}}^{(t)}
= \boldsymbol{\Theta}^{(t)} + \eta_t\gamma_t\widehat{\boldsymbol{B}}^{(t)}$ for some
$\eta_t\in(0,1)$.

Then:
\begin{enumerate}
  \item $A_t \geq 0$, with strict inequality whenever
        $\boldsymbol{R}^{(t)} \notin \mathrm{Null}(\boldsymbol{P}_{\boldsymbol X})$, where
        $\boldsymbol P_{\boldsymbol X}
         = \boldsymbol X(\boldsymbol X^\top\boldsymbol X)^{-1}\boldsymbol X^\top$;
  \item the update
        $\boldsymbol{\Theta}^{(t+1)}
         = \boldsymbol{\Theta}^{(t)} + \gamma_t\widehat{\boldsymbol B}^{(t)}$
        satisfies
        $Q_{\mathrm{gate}}(\boldsymbol{\Theta}^{(t+1)})
         > Q_{\mathrm{gate}}(\boldsymbol{\Theta}^{(t)})$
        whenever $0 < \gamma_t < 2A_t / |C_t|$.
\end{enumerate}
\end{theorem}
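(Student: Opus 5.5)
The proof splits into two parts. First compute the gradient of $Q_{\mathrm{gate}}$ in closed form, which shows that $A_t$ is a squared norm. Then apply a second-order Taylor expansion with Lagrange remainder along the update direction.

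\emph{Step 1 (gradient).} Write $g_{ij}=g_j(\boldsymbol x_i;\boldsymbol\Theta^{(t)})$. Since $\log g_j(\boldsymbol x_i;\boldsymbol\Theta)=\boldsymbol x_i^\top\boldsymbol\theta_j-\log\sum_k\exp(\boldsymbol x_i^\top\boldsymbol\theta_k)$, differentiation gives
\[
\frac{\partial Q_{\mathrm{gate}}}{\partial\boldsymbol\theta_j}
=\sum_{i=1}^n\Bigl(\tau_{ij}^{(t)}-g_{ij}\sum_{k=1}^e\tau_{ik}^{(t)}\Bigr)\boldsymbol x_i
=\sum_{i=1}^n\bigl(\tau_{ij}^{(t)}-g_{ij}\bigr)\boldsymbol x_i .
\]
Here I use $\sum_k\tau_{ik}^{(t)}=1$ from the E-step. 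In matrix form, $\nabla_{\boldsymbol\Theta}Q_{\mathrm{gate}}(\boldsymbol\Theta^{(t)})=\boldsymbol X^\top\boldsymbol R^{(t)}\in\mathbb R^{(p+1)\times e}$, and the inner product is the Frobenius (trace) inner product.

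\emph{Step 2 (sign of $A_t$).} Substituting $\widehat{\boldsymbol B}^{(t)}=(\boldsymbol X^\top\boldsymbol X)^{-1}\boldsymbol X^\top\boldsymbol R^{(t)}$, which is well defined by the rank assumption, gives
\[
A_t=\operatorname{tr}\bigl(\boldsymbol R^{(t)\top}\boldsymbol X(\boldsymbol X^\top\boldsymbol X)^{-1}\boldsymbol X^\top\boldsymbol R^{(t)}\bigr)
=\operatorname{tr}\bigl(\boldsymbol R^{(t)\top}\boldsymbol P_{\boldsymbol X}\boldsymbol R^{(t)}\bigr)
=\|\boldsymbol P_{\boldsymbol X}\boldsymbol R^{(t)}\|_{\mathcal F}^2 .
\]
The last equality uses that $\boldsymbol P_{\boldsymbol X}$ is a symmetric idempotent. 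Hence $A_t\ge 0$. Moreover $A_t=0$ exactly when every column of $\boldsymbol R^{(t)}$ lies in $\mathrm{Null}(\boldsymbol P_{\boldsymbol X})$, which gives part 1.

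\emph{Step 3 (ascent).} Set $\phi(s)=Q_{\mathrm{gate}}(\boldsymbol\Theta^{(t)}+s\widehat{\boldsymbol B}^{(t)})$. This is $C^2$ by assumption 2; indeed it is smooth, being a sum of log-softmax terms. Taylor's theorem with Lagrange remainder on $[0,\gamma_t]$ gives some $\eta_t\in(0,1)$ with
\[
Q_{\mathrm{gate}}(\boldsymbol\Theta^{(t+1)})-Q_{\mathrm{gate}}(\boldsymbol\Theta^{(t)})
=\gamma_tA_t+\tfrac12\gamma_t^2C_t
\ge\gamma_t\bigl(A_t-\tfrac12\gamma_t|C_t|\bigr).
\]
This is strictly positive when $0<\gamma_t<2A_t/|C_t|$. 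That interval is nonempty only if $A_t>0$, i.e.\ when $\boldsymbol R^{(t)}\notin\mathrm{Null}(\boldsymbol P_{\boldsymbol X})$. If $C_t=0$ the increment is $\gamma_tA_t>0$ for every $\gamma_t>0$. I will also note that $Q_{\mathrm{gate}}$ is concave, since the Hessian of log-softmax is $-(\mathrm{diag}(g)-gg^\top)\otimes\boldsymbol x_i\boldsymbol x_i^\top$. Hence $C_t\le 0$, and the bound is the natural one.

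\emph{Main obstacle.} The intermediate point $\widetilde{\boldsymbol\Theta}^{(t)}$, and hence $C_t$, depends on $\gamma_t$ itself, so the condition $\gamma_t<2A_t/|C_t|$ is implicit. I will state the result exactly as given, as a consequence of the mean-value form. To make it operational, I will add a remark giving a uniform bound: $\mathrm{diag}(g)-gg^\top\preceq\tfrac12 I$, so $|C_t|\le\tfrac12\sum_i\|\widehat{\boldsymbol B}^{(t)\top}\boldsymbol x_i\|^2\le\tfrac12\|\boldsymbol X\widehat{\boldsymbol B}^{(t)}\|_{\mathcal F}^2=:L_t$. This bound holds at every $\widetilde{\boldsymbol\Theta}$. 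Therefore any $0<\gamma_t<2A_t/L_t$ guarantees strict ascent, independently of $\eta_t$.
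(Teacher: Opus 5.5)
Your proposal is correct and follows the paper's proof step for step. Both compute the gradient $\boldsymbol X^\top\boldsymbol R^{(t)}$ using $\sum_j\tau_{ij}^{(t)}=1$, obtain $A_t=\operatorname{tr}\bigl((\boldsymbol R^{(t)})^\top\boldsymbol P_{\boldsymbol X}\boldsymbol R^{(t)}\bigr)$, use negative semidefiniteness of the Kronecker-form Hessian to get $C_t\le 0$, and finish with a second-order Taylor expansion with mean-value remainder along $\widehat{\boldsymbol B}^{(t)}$.

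Your closing remark plays the role of the paper's Theorem~\ref{thm:learning_rate}, which uses the cruder bound $\boldsymbol\Sigma_i\preceq\boldsymbol I$ to get $|C_t|\le A_t$ and hence ascent for $\gamma_t\in(0,2)$. Since $\|\boldsymbol X\widehat{\boldsymbol B}^{(t)}\|_{\mathcal F}^2=A_t$, your sharper bound $\boldsymbol\Sigma_i\preceq\tfrac12\boldsymbol I$ gives $L_t=A_t/2$, which guarantees ascent for every $\gamma_t\in(0,4)$.
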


\begin{proof}
The softmax gating probabilities are given by
\[
  g_j(\boldsymbol{x}_i;\boldsymbol{\Theta})
  = \frac{\exp(\boldsymbol{x}_i^\top\boldsymbol{\theta}_j)}
         {\sum_{k=1}^{e}\exp(\boldsymbol{x}_i^\top\boldsymbol{\theta}_k)}.
\]
Therefore,
\[
  \log g_j(\boldsymbol{x}_i;\boldsymbol{\Theta})
  = \boldsymbol{x}_i^\top\boldsymbol{\theta}_j
    - \log\!\left(\sum_{k=1}^{e}\exp(\boldsymbol{x}_i^\top\boldsymbol{\theta}_k)\right).
\]

Substituting into $Q_{\mathrm{gate}}$ and using the identity
$\sum_{j=1}^{e}\tau_{ij}^{(t)}=1$ yields
\[
  Q_{\mathrm{gate}}(\boldsymbol{\Theta})
  = \sum_{i=1}^{n} \sum_{j=1}^{e}
      \tau_{ij}^{(t)} \boldsymbol{x}_i^\top \boldsymbol{\theta}_j
    - \sum_{i=1}^{n}
      \log\!\left(\sum_{k=1}^{e}\exp(\boldsymbol{x}_i^\top\boldsymbol{\theta}_k)\right).
\]

Differentiating with respect to $\boldsymbol{\theta}_j$ gives
\[
  \nabla_{\boldsymbol{\theta}_j} Q_{\mathrm{gate}}(\boldsymbol{\Theta})
  = \sum_{i=1}^{n}
      \bigl(\tau_{ij}^{(t)} - g_j(\boldsymbol{x}_i;\boldsymbol{\Theta})\bigr)
      \boldsymbol{x}_i.
\]

Stacking the gradients columnwise gives
\[
  \nabla_{\boldsymbol{\Theta}} Q_{\mathrm{gate}}(\boldsymbol{\Theta})
  = \boldsymbol X^\top \boldsymbol R^{(t)}.
\]

Using the definition of $\widehat{\boldsymbol B}^{(t)}$,
\[
  A_t
  = \mathrm{tr}\!\left[
      (\boldsymbol R^{(t)})^\top
      \boldsymbol P_{\boldsymbol X}
      \boldsymbol R^{(t)}
    \right].
\]

Since $\boldsymbol P_{\boldsymbol X}$ is symmetric positive semidefinite, $A_t \geq 0$,
with equality if and only if $\boldsymbol R^{(t)} \in \mathrm{Null}(\boldsymbol P_{\boldsymbol X})$.
This proves Part~1.

For the Hessian, differentiating the softmax probabilities gives
\[
  \frac{\partial g_{ij}}{\partial\boldsymbol{\theta}_\ell}
  = g_{ij}(\delta_{j\ell}-g_{i\ell})\boldsymbol x_i,
\]
where $\delta_{j\ell}$ denotes the Kronecker delta. Define
$\boldsymbol g_i = (g_{i1},\dots,g_{ie})^\top$ and
$\boldsymbol\Sigma_i = \operatorname{Diag}(\boldsymbol g_i) - \boldsymbol g_i\boldsymbol g_i^\top$.
The Hessian may be written compactly as
\[
  \nabla^2 Q_{\mathrm{gate}}
  = -\sum_{i=1}^{n}
      \bigl[\boldsymbol\Sigma_i \otimes (\boldsymbol x_i\boldsymbol x_i^\top)\bigr].
\]

This confirms $\nabla^2 Q_{\mathrm{gate}}$ is negative semi-definite, hence $C_t \leq 0$.

By the multivariate second-order Taylor expansion,
\[
  Q_{\mathrm{gate}}(\boldsymbol\Theta^{(t+1)})
  - Q_{\mathrm{gate}}(\boldsymbol\Theta^{(t)})
  = \gamma_t A_t + \tfrac{\gamma_t^2}{2} C_t.
\]

Since $C_t \leq 0$,
\[
  Q_{\mathrm{gate}}(\boldsymbol\Theta^{(t+1)})
  - Q_{\mathrm{gate}}(\boldsymbol\Theta^{(t)})
  \geq \gamma_t\!\left(A_t - \tfrac{\gamma_t}{2}|C_t|\right) > 0
\]
whenever $0 < \gamma_t < 2A_t/|C_t|$. This proves Part~2.
\end{proof}

\begin{theorem}
\label{thm:learning_rate}
Under the multinomial softmax gating parametrization,
\[
  \frac{2A_t}{|C_t|} \geq 2.
\]
Consequently, every learning-rate sequence satisfying $0 < \gamma_t < 2$ guarantees
monotone ascent of the gating auxiliary function:
\[
  Q_{\mathrm{gate}}(\boldsymbol{\Theta}^{(t+1)})
  \geq Q_{\mathrm{gate}}(\boldsymbol{\Theta}^{(t)})
\]
for all~$t$. Hence, the resulting optimization procedure defines a valid generalized EM
algorithm \cite{wu1983}.
\end{theorem}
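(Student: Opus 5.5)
The plan is to bound $|C_t|$ directly by $A_t$, using the two explicit formulas obtained in the proof of Theorem~\ref{thm:ascent}: the gradient $\nabla_{\boldsymbol\Theta}Q_{\mathrm{gate}}=\boldsymbol X^\top\boldsymbol R^{(t)}$ and the Kronecker form of the Hessian. The bound will be uniform in $\boldsymbol\Theta$, so the unknown intermediate point $\widetilde{\boldsymbol\Theta}^{(t)}$ does not matter.

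\emph{Expressing $A_t$.} First write $A_t$ as a squared norm. Since $\boldsymbol P_{\boldsymbol X}$ is a symmetric idempotent projection, $A_t=\mathrm{tr}[(\boldsymbol R^{(t)})^\top\boldsymbol P_{\boldsymbol X}\boldsymbol R^{(t)}]=\|\boldsymbol P_{\boldsymbol X}\boldsymbol R^{(t)}\|_{\mathcal F}^2$. Because $\boldsymbol X\widehat{\boldsymbol B}^{(t)}=\boldsymbol P_{\boldsymbol X}\boldsymbol R^{(t)}$, this gives
\[
A_t=\|\boldsymbol X\widehat{\boldsymbol B}^{(t)}\|_{\mathcal F}^2=\sum_{i=1}^n\|\boldsymbol u_i\|^2,
\qquad
\boldsymbol u_i=(\widehat{\boldsymbol B}^{(t)})^\top\boldsymbol x_i\in\mathbb R^e .
\]

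\emph{Expressing $C_t$.} Next, unpack the Kronecker-form Hessian applied to the direction $\widehat{\boldsymbol B}^{(t)}$, with the $\boldsymbol\Sigma_i$ built from the gating probabilities at $\widetilde{\boldsymbol\Theta}^{(t)}$. This gives
\[
|C_t|=\sum_{i=1}^n \boldsymbol u_i^\top\boldsymbol\Sigma_i\boldsymbol u_i .
\]
The claim $2A_t/|C_t|\ge 2$ is therefore equivalent to $\sum_i \boldsymbol u_i^\top\boldsymbol\Sigma_i\boldsymbol u_i\le\sum_i\|\boldsymbol u_i\|^2$. It suffices to show $\lambda_{\max}(\boldsymbol\Sigma_i)\le 1$ for every probability vector $\boldsymbol g_i$.

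\emph{Key lemma.} The main step is a spectral bound on $\boldsymbol\Sigma=\operatorname{Diag}(\boldsymbol g)-\boldsymbol g\boldsymbol g^\top$, which I will obtain via Gershgorin. The absolute row sum of row $j$ is
\[
g_j(1-g_j)+g_j\sum_{k\ne j}g_k=2g_j(1-g_j)\le\tfrac12 .
\]
Hence $\lambda_{\max}(\boldsymbol\Sigma_i)\le 1/2$, and since $\boldsymbol\Sigma_i\succeq 0$ we get $|C_t|\le \tfrac12 A_t$. This is in fact stronger than needed: $2A_t/|C_t|\ge 4\ge 2$.

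\emph{Degenerate cases and conclusion.} If $A_t=0$, then $\boldsymbol P_{\boldsymbol X}\boldsymbol R^{(t)}=0$. By the rank assumption this forces $\boldsymbol X^\top\boldsymbol R^{(t)}=0$, hence $\widehat{\boldsymbol B}^{(t)}=0$, so $\boldsymbol\Theta^{(t+1)}=\boldsymbol\Theta^{(t)}$ and the ascent inequality holds with equality. If $A_t>0$ and $C_t=0$, the ratio is $+\infty$ by convention, and the exact Taylor identity from Theorem~\ref{thm:ascent} gives an increase of $\gamma_tA_t>0$. In the remaining case, $0<\gamma_t<2\le 2A_t/|C_t|$, and Part~2 of Theorem~\ref{thm:ascent} yields strict ascent.

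Since the E-step does not decrease $Q$ and the expert updates maximize their separable terms, the whole iteration does not decrease $Q$. This is the GEM condition in the sense of \cite{wu1983}. I expect the only real obstacle to be the spectral bound on $\boldsymbol\Sigma_i$; everything else is bookkeeping.
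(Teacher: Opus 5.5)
Your argument is correct and follows the paper's skeleton: write $A_t=\|\boldsymbol X\widehat{\boldsymbol B}^{(t)}\|_{\mathcal F}^2$, unpack $|C_t|=\sum_i\boldsymbol u_i^\top\boldsymbol\Sigma_i\boldsymbol u_i$, and reduce the claim to a spectral bound on $\boldsymbol\Sigma_i$. The difference is in that key lemma.

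The paper bounds the quadratic form directly: $\boldsymbol z^\top\boldsymbol\Sigma_i\boldsymbol z=\sum_j g_{ij}z_j^2-\bigl(\sum_j g_{ij}z_j\bigr)^2\le\sum_j g_{ij}z_j^2\le\|\boldsymbol z\|^2$. It discards the squared-mean term and uses $g_{ij}\le 1$, which gives $\boldsymbol\Sigma_i\preceq\boldsymbol I$ and exactly the constant $2$ in the statement.

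Your Gershgorin (equivalently $\|\cdot\|_\infty$) bound gives $\boldsymbol\Sigma_i\preceq\tfrac12\boldsymbol I$, and this constant is sharp. For $e=2$ and $\boldsymbol g=(\tfrac12,\tfrac12)$, the eigenvector $(1,-1)$ has eigenvalue $\tfrac12$. It is also orthogonal to $\mathbf 1$, as every $\boldsymbol u_i$ must be, because $\boldsymbol R^{(t)}\mathbf 1=\boldsymbol 0$ forces $\widehat{\boldsymbol B}^{(t)}\mathbf 1=\boldsymbol 0$. So you in fact prove $2A_t/|C_t|\ge 4$, which would allow any $\gamma_t\in(0,4)$; this is the familiar B\"ohning-type curvature bound from multinomial logistic regression. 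The two lemmas are equally elementary. The paper's is the shortest path to the stated constant, while yours gains a factor of two in the admissible step size.

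You also make explicit two things the paper leaves implicit:
\begin{itemize}
\item The bound is uniform in $\widetilde{\boldsymbol\Theta}^{(t)}$. This removes the circularity in Theorem~\ref{thm:ascent}, where $\widetilde{\boldsymbol\Theta}^{(t)}$ itself depends on $\gamma_t$.
\item The degenerate cases where the ratio is undefined are handled. Your case $A_t>0$, $C_t=0$ is actually vacuous. For strictly positive softmax weights $\ker\boldsymbol\Sigma_i=\mathrm{span}(\mathbf 1)$, and $\boldsymbol u_i\perp\mathbf 1$, so $C_t=0$ forces every $\boldsymbol u_i=\boldsymbol 0$, hence $A_t=0$.
\end{itemize}

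One correction to your GEM remark: it is the M-step, not the E-step, that must not decrease $Q(\cdot\mid\boldsymbol\Theta^{(t)},\boldsymbol\Psi^{(t)})$. The expert half of that claim treats the tree, SVM and QDA fits as exact maximizers of $Q_j$, an idealization the paper also makes without comment.
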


\begin{proof}
From Theorem~\ref{thm:ascent},
\[
  A_t
  = \mathrm{tr}\!\left[(\boldsymbol{R}^{(t)})^{\top}\boldsymbol{P}_{\boldsymbol X}\boldsymbol R^{(t)}\right]
  = \|\boldsymbol P_{\boldsymbol X}\boldsymbol R^{(t)}\|_{\mathcal{F}}^2
  = \|\boldsymbol X\widehat{\boldsymbol B}^{(t)}\|_\mathcal{F}^2.
\]

To bound $|C_t|$, we first show $\boldsymbol\Sigma_i \preceq \boldsymbol I$. For any
$\boldsymbol z \in \mathbb{R}^e$,
\[
  \boldsymbol z^\top\boldsymbol\Sigma_i\boldsymbol z
  = \sum_{j=1}^{e} g_{ij}z_j^2
    - \!\left(\sum_{j=1}^{e} g_{ij}z_j\right)^2
  \leq \sum_{j=1}^{e} g_{ij}z_j^2
  \leq \sum_{j=1}^{e} z_j^2
  = \|\boldsymbol z\|^2,
\]
since $0 \leq g_{ij} \leq 1$ for all $i,j$. Hence $\boldsymbol\Sigma_i \preceq \boldsymbol I$.

Setting $\boldsymbol v_i = \boldsymbol x_i^\top\widehat{\boldsymbol B}^{(t)}$ and using
the Kronecker representation of the Hessian gives
\[
  |C_t|
  = \sum_{i=1}^{n}\boldsymbol v_i^\top\boldsymbol\Sigma_i\boldsymbol v_i
  \leq \sum_{i=1}^{n}\|\boldsymbol v_i\|^2
  = \|\boldsymbol X\widehat{\boldsymbol B}^{(t)}\|_\mathcal{F}^2
  = A_t.
\]

Therefore $2A_t / |C_t| \geq 2$, and every $\gamma_t \in (0,2)$ satisfies the condition
$0 < \gamma_t < 2A_t/|C_t|$ from Theorem~\ref{thm:ascent}.
\end{proof}

The principal computational advantage of this formulation is that the gating update reduces
to a simple closed-form multivariate regression problem, thereby avoiding repeated iterative
optimization of the multinomial logistic objective. At the same time, the update remains
directly driven by the discrepancy between the posterior responsibilities and the current
gating probabilities, ensuring that the probabilistic structure learned during the E-step
continues to guide the parameter updates. Moreover, Theorem~\ref{thm:ascent} shows that the
resulting least-squares correction acts as a local ascent step for the gating auxiliary
function for sufficiently small learning rates, thereby providing the proposed procedure
with a generalized EM interpretation~\cite{dempster1977}.

The complete training procedure of the proposed heterogeneous mixture-of-experts framework is summarized in Algorithm~\ref{alg:training}. The algorithm alternates between the computation of posterior responsibilities in the E-step and the estimation of the expert and gating parameters in the M-step until convergence. The regression-based gating update described in Section~\ref{sec:estimation} is incorporated directly into the iterative optimization procedure.
\begin{algorithm}[htbp]
\caption{Training Procedure of the Proposed Heterogeneous Mixture-of-Experts}
\label{alg:training}
\footnotesize
\begin{algorithmic}[1]

\Require
Training data
$\mathcal D=\{(x_i,y_i)\}_{i=1}^{n}$,
expert families
$\mathcal F=\{\mathrm{DT},\mathrm{SVM},\mathrm{QDA}\}$,
number of experts $e$,
maximum iterations $T$,
initial learning rate $\gamma_0$.

\Ensure
Estimated gating parameters
$\widehat{\Theta}$,
expert parameters
$\widehat{\Psi}$.

\State Initialize gating parameters $\Theta^{(0)}$.

\State Compute initial gating probabilities
$G^{(0)}$.

\For{$t=0,\ldots,T-1$}

\State \textbf{E-step}

\For{each observation $i$}

\For{each expert $j$}

\State Compute posterior responsibility

\[
\tau_{ij}^{(t)}
=
\frac{
g_j(x_i;\Theta^{(t)})
p_j(y_i|x_i;\psi_j^{(t)})
}
{
\sum_{k=1}^{e}
g_k(x_i;\Theta^{(t)})
p_k(y_i|x_i;\psi_k^{(t)})
}.
\]

\EndFor
\EndFor

\State Form the responsibility matrix
$T^{(t)}$.

\State

\textbf{M-step}

\For{each expert $j$}

\If{expert $j$ is DT}

\State Train a weighted Decision Tree.

\ElsIf{expert $j$ is Linear SVM}

\State Train a weighted Linear SVM.

\State Calibrate probabilities using Platt scaling.

\ElsIf{expert $j$ is QDA}

\State Estimate weighted means and covariance matrices.

\EndIf

\EndFor

\State Compute residual matrix $R^{(t)}
=
T^{(t)}
-
G^{(t)}.$

\State Compute $B^{(t)}
=
(X^\top X)^{-1}
X^\top
R^{(t)}.$

\State Update gating parameters $\Theta^{(t+1)}
=
\Theta^{(t)}
+
\gamma_t B^{(t)}.$
\State Update learning rate
$\gamma_{t+1}
=
\eta\gamma_t.$

\If{log-likelihood increment $<\varepsilon$}

\State Break.

\EndIf

\EndFor

\State Return
$\widehat{\Theta}$,
$\widehat{\Psi}$.

\end{algorithmic}
\end{algorithm}
\normalsize
\section{Application}
\label{sec:application}

In this section the overall classification accuracy performance of the heterogeneous mixture
model is evaluated and compared with that of the MoDT architecture. The
heterogeneous mixture model is further contrasted with non-interpretable ML techniques such
as Random Forest.

Experiments are conducted on several publicly available benchmark datasets to evaluate the
proposed heterogeneous mixture framework. Categorical features are one-hot encoded if
necessary to make them compatible with the underlying expert models. Unless otherwise stated, all experiments employ the training procedure described in Algorithm~\ref{alg:training}. The same optimization settings and convergence criteria are used across all datasets to ensure a fair comparison between the proposed heterogeneous framework and the competing methods. As the Python library
\textsc{scikit-learn}~\cite{scikit-learn} is used for the decision trees within the
implementation of the heterogeneous mixture model, the decision trees used for MoDT are
also implemented using \textsc{scikit-learn}. Furthermore, the proposed framework is
compared against Random Forests~\cite{breiman2001rf}, which are widely recognized for their strong predictive
performance. Although Random Forests are frequently regarded as strong predictive baselines, they typically provide considerably less transparency than interpretable mixture-based models. Consequently, achieving predictive performance comparable to Random Forests may itself be viewed as a favorable outcome in settings where interpretability is an important modeling objective. To allow a fair comparison,
the same preprocessing steps included in the heterogeneous model are also applied to the
training data of all alternative models.

For each dataset, the data is split into training data (75\%) and test data (25\%) using
randomized train-test splits. The splitting procedure is repeated 10 times with different
random seeds in order to obtain statistically stable performance estimates, and the mean
classification accuracy and standard deviation over all repetitions are reported as the
final performance.

The proposed heterogeneous model uses a fixed optimization configuration throughout all
experiments. In particular, the initialization learning rate, the learning rate decay and
the initialization procedure are fixed across all datasets and repetitions. We fix the
initialization procedure to $\text{Random\_init}(40)$ to minimize the variability of the
initialization due to stochastic effects and to ensure reproducibility. As a result,
experimental variability is introduced mainly by random train-test splits, rather than
repeated hyperparameter tuning.

The motivation for using a fixed hyperparameter setting is two-fold. First, the main goal
of this work is to assess the impact of heterogeneous expert composition in the MoDT
framework, rather than optimizing dataset-specific hyperparameters. By fixing optimization
settings, improvements in predictive performance can be more directly attributed to the
heterogeneous expert architecture itself rather than to aggressive hyperparameter search
procedures. Second, large-scale hyperparameter optimization can give over-optimistic
estimates, and may mask the intrinsic behavior of heterogeneous expert specialization. Thus,
the proposed evaluation protocol focuses on reproducibility, architectural fairness and
robustness across multiple datasets under a common optimization setting.

In addition, the proposed approach allows for analysis beyond predictive performance.
Because the heterogeneous mixture framework integrates decision trees, support vector machines
and quadratic discriminant analysis experts in a common gating mechanism, expert utilization
statistics are additionally analyzed to investigate specialization behavior across datasets
and feature distributions.

We also conduct an expert specialization analysis for the proposed heterogeneous mixture
framework. Specifically, we say that an expert dominates the heterogeneous gating mechanism
for the corresponding dataset if the average utilization of a single expert over multiple
experimental runs exceeds 90\%. In these cases, the dominant expert model is also trained
independently on the same dataset and the overall maximum predictive accuracy is reported.
This analysis offers additional insights into whether the heterogeneous framework benefits
primarily from collaborative expert interaction or whether certain datasets are inclined to
a particular expert type. For completeness, Algorithm~\ref{alg:prediction} summarizes the prediction procedure together with the expert specialization analysis used throughout the experimental study. Besides generating class predictions, the algorithm records the average utilization of each expert and identifies datasets for which a single expert dominates the gating mechanism.

\begin{algorithm}[t]
\caption{Prediction and Expert Specialization Analysis}
\label{alg:prediction}
\begin{algorithmic}[1]

\Require
Test data,
trained gating parameters
$\widehat{\Theta}$,
trained experts
$\widehat{\Psi}$.

\Ensure
Predicted labels and expert utilization statistics.

\For{each test observation $x$}

\For{each expert $j$}

\State Compute gating probability

$g_j(x;\widehat{\Theta}).$

\State Compute calibrated expert probability

$p_j(y|x;\widehat{\psi}_j).$

\EndFor

\State Compute mixture probability

$P(y|x)
=
\sum_{j=1}^{e}
g_j(x)
p_j(y|x).$

\State Predict
$\widehat y
=
\arg\max_y
P(y|x).$
\State Assign the observation to the expert with maximum gating probability.

\EndFor

\State Compute the average utilization of each expert.

\If{average utilization of an expert exceeds $90\%$}

\State Train the dominant expert independently on the same dataset.

\State Evaluate its standalone predictive accuracy.

\State Report maximum of standalone and heterogeneous model accuracies.

\Else

\State Report only the heterogeneous model accuracy.

\EndIf

\end{algorithmic}
\end{algorithm}

\subsection{Application on Synthetic Datasets}

\begin{table*}[!t]
  \centering
  \caption{Description of synthetic datasets.}
  \label{tab:synthetic_datasets}
  \resizebox{\textwidth}{!}{%
    \begin{tabular}{lccp{8.5cm}}
      \toprule
      \textbf{Dataset} & \textbf{Data Points} & \textbf{Classes} & \textbf{Description} \\
      \midrule
      overlapping\_rings             & 3{,}000       & 2 & Noisy concentric rings with overlap. \\
      rotated\_gaussian\_close       & 2{,}000       & 2 & Rotated Gaussian classes with overlapping quadratic boundaries. \\
      piecewise\_linear\_kink        & 3{,}000       & 2 & Piecewise linear decision boundary with a kink. \\
      anisotropic\_gaussian          & 600           & 2 & Two centered Gaussian classes with orthogonal anisotropic covariance structures. \\
      blob\_vs\_ring                 & 500  & 2 & Central Gaussian blob versus outer ring structure. \\
      multiclass\_anisotropic\_gaussians & 2{,}400   & 4 & Anisotropic Gaussian classes with different covariance orientations. \\
      hierarchical\_gaussians        & 3{,}000       & 6 & Hierarchical Gaussian subclasses forming two superclusters. \\
      radial\_sector\_multiclass     & 3{,}000       & 6 & Angular sector-based classes with nonlinear radial boundaries. \\
      \bottomrule
    \end{tabular}%
  }
\end{table*}

\begin{table*}[!t]
  \centering
  \caption{Comparison of models across synthetic datasets (mean accuracy $\pm$ std).}
  \label{tab:model_comparison}
  \resizebox{\textwidth}{!}{%
    \begin{tabular}{lcccccc}
      \toprule
      \multirow{2}{*}{Dataset}
        & \multirow{2}{*}{Random Forest}
        & \multicolumn{2}{c}{MoDT}
        & \multicolumn{2}{c}{Hetero-Mix Expert}
        & \multirow{2}{*}{Avg.\ Expert Usage (\%)} \\
      \cmidrule(lr){3-4}\cmidrule(lr){5-6}
        & & Depth 2 & Depth 3 & Depth 2 & Depth 3 & \\
      \midrule
      Overlapping Rings             & $0.77\pm0.01$ & $0.78\pm0.01$ & $0.78\pm0.01$ & $0.77\pm0.01$ & $\mathbf{0.79\pm0.02}$ & DT 75.4, QDA 24.6 \\
      Rotated Gaussian Slabs        & $0.69\pm0.01$ & $0.72\pm0.01$ & $0.72\pm0.01$ & $0.72\pm0.01$ & $\mathbf{0.73\pm0.01}$ & DT 62.5, SVM 20.7, QDA 16.8 \\
      Piecewise Linear Kink         & $\mathbf{0.95\pm0.01}$ & $0.92\pm0.01$ & $0.94\pm0.01$ & $0.91\pm0.01$ & $0.94\pm0.01$ & DT 18.5, SVM 39.6, QDA 41.9 \\
      Anisotropic Gaussian          & $0.88\pm0.02$ & $0.86\pm0.02$ & $0.85\pm0.02$ & $\mathbf{0.89\pm0.02}$ & $0.85\pm0.02$ & QDA 100 \\
      Inner Blob + Outer Ring       & $\mathbf{0.81\pm0.02}$ & $\mathbf{0.81\pm0.02}$ & $0.80\pm0.02$ & $\mathbf{0.81\pm0.03}$ & $0.80\pm0.02$ & DT 44.1, QDA 55.9 \\
      Multiclass Anisotropic Gaussian & $0.64\pm0.02$ & $0.63\pm0.04$ & $0.64\pm0.02$ & $\mathbf{0.69\pm0.02}$ & $\mathbf{0.69\pm0.02}$ & QDA 100 \\
      Hierarchical Gaussians        & $\mathbf{0.99\pm0.01}$ & $0.98\pm0.01$ & $0.98\pm0.01$ & $\mathbf{0.99\pm0.01}$ & $0.97\pm0.02$ & QDA 100 or DT 29.6, QDA 70.4 \\
      Radial Sector Multiclass      & $0.90\pm0.01$ & $0.89\pm0.02$ & $0.89\pm0.01$ & $\mathbf{0.92\pm0.01}$ & $\mathbf{0.92\pm0.01}$ & QDA 100 \\
      \bottomrule
    \end{tabular}%
  }
\end{table*}

\begin{figure*}
  \centering
  \includegraphics[width=\textwidth]{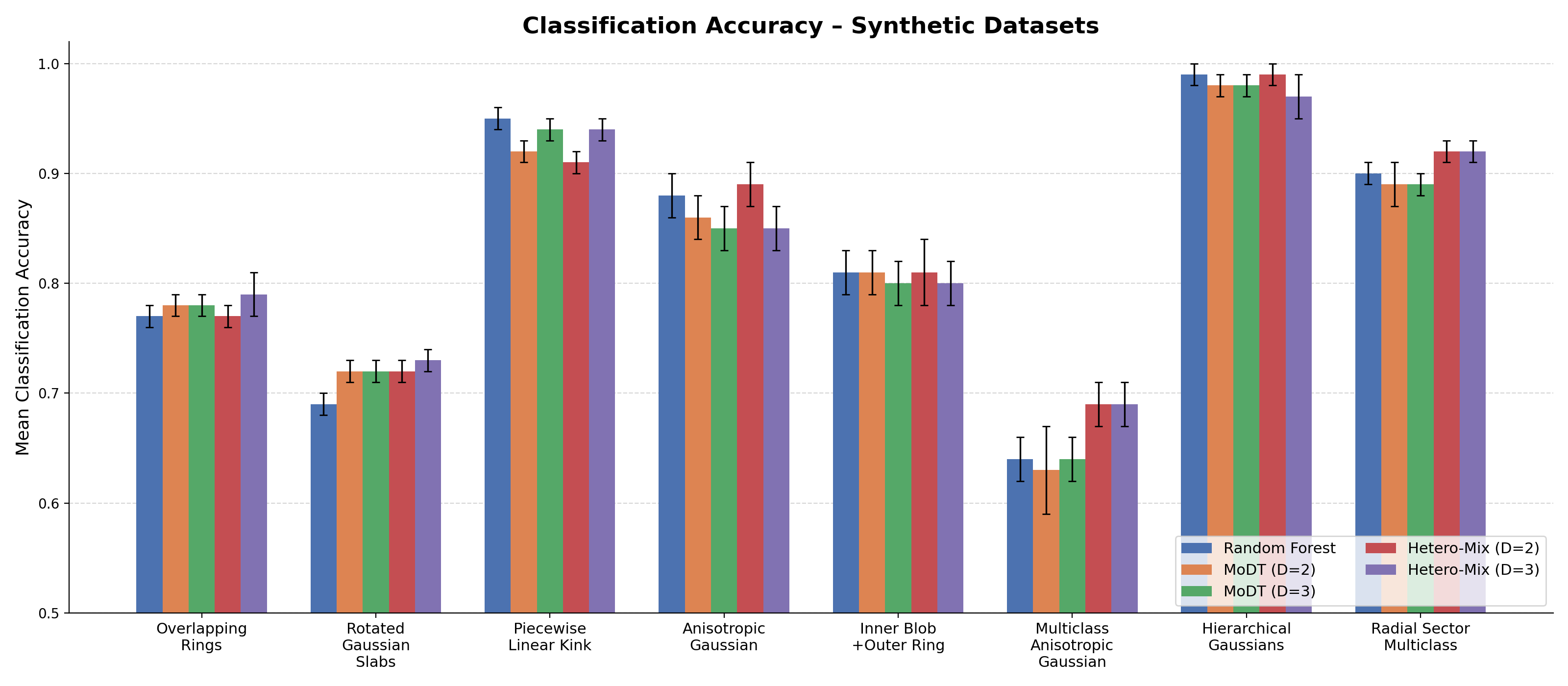}
  \caption{Mean classification accuracy on synthetic datasets. Error bars indicate one
           standard deviation over ten random train-test splits. Results are shown for
           Random Forest, MoDT (depth~2 and~3), and the proposed Hetero-Mix (depth~2
           and~3).}
  \label{fig:synthetic_accuracy}
\end{figure*}

\begin{figure*}
  \centering
  \includegraphics[width=\textwidth]{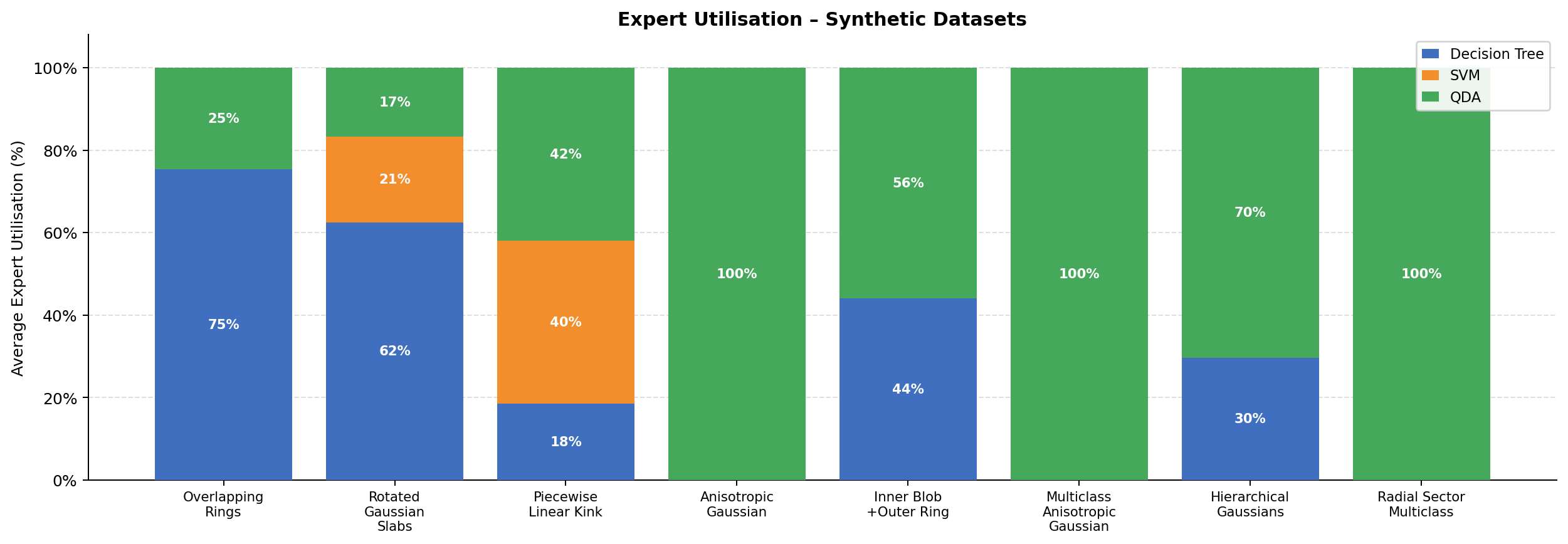}
  \caption{Average expert utilisation (\%) on synthetic datasets for the proposed
           heterogeneous mixture model. Each stacked bar shows the proportion of samples
           assigned to the Decision Tree, SVM, and QDA experts by the gating mechanism,
           averaged over ten train-test splits.}
  \label{fig:synthetic_experts}
\end{figure*}

Table~\ref{tab:synthetic_datasets} gives the descriptions of the generated datasets while
Table~\ref{tab:model_comparison} and Figure~\ref{fig:synthetic_accuracy} present the
experimental results on a diverse collection of synthetic benchmark datasets designed to
capture various geometric decision boundaries and covariance structures.
Figure~\ref{fig:synthetic_experts} illustrates the expert utilisation behaviour across
these datasets. The proposed heterogeneous mixture framework performs well on a
number of nonlinear and covariance-sensitive datasets and at the same time provides insight
into the behavior of expert specialization. The dominance of the QDA expert in the gating
mechanism for datasets like Anisotropic Gaussian, Hierarchical Gaussian and Radial Sector
Multiclass (Figure~\ref{fig:synthetic_experts}) indicates the suitability of
covariance-based quadratic decision boundaries for these distributions. Furthermore, for Piecewise Linear Kink and Rotated Gaussian Slabs the heterogeneous
framework reveals a significant collaboration between decision tree, SVM and QDA experts,
which indicates that different experts specialize in different regions of the feature space.
These observations demonstrate that the proposed heterogeneous gating framework is capable
of adaptively selecting suitable expert models according to intrinsic geometric
characteristics of the data distribution while achieving predictive performance competitive
with strong ensemble baselines such as Random Forests.

\begin{figure*}
\centering


\begin{subfigure}[!t]{0.43\textwidth}
\centering
\includegraphics[width=\textwidth]{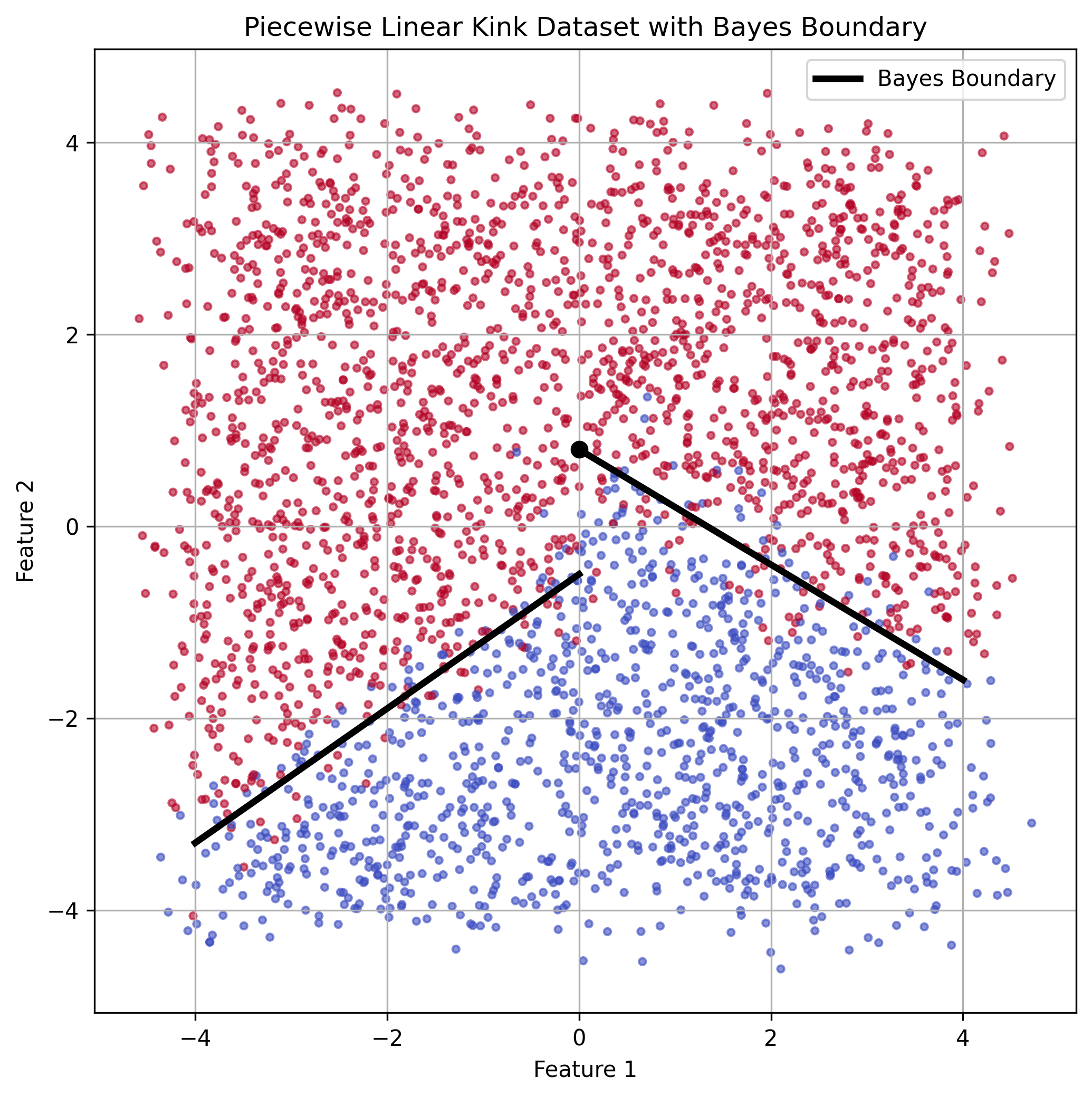}
\caption{Ground-truth decision boundary.}
\end{subfigure}
\hfill
\begin{subfigure}[!t]{0.43\textwidth}
\centering
\includegraphics[width=\textwidth]{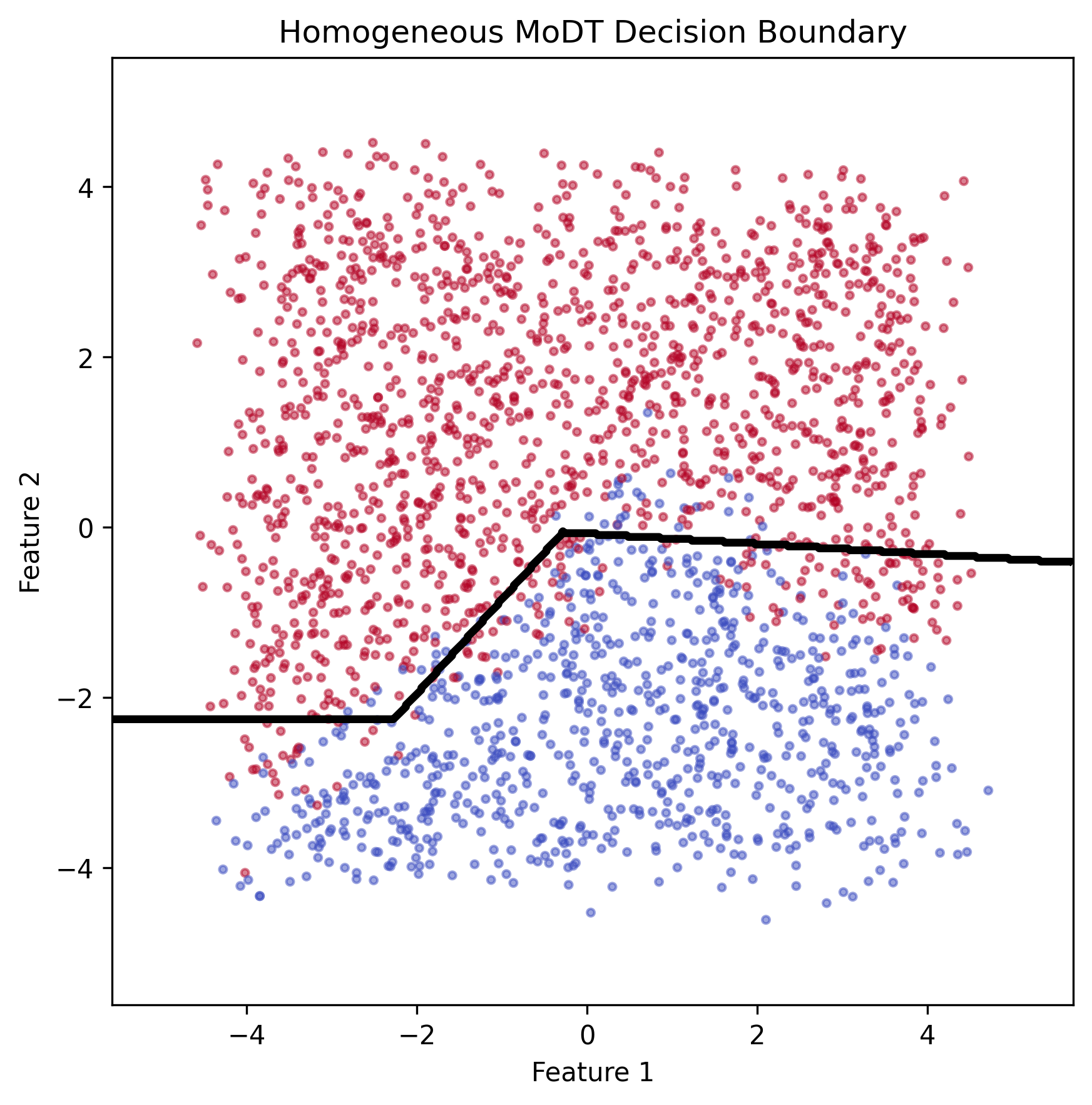}
\caption{Homogeneous MoDT decision boundary.}
\end{subfigure}

\vspace{0.5em}


\begin{subfigure}[!t]{0.5\textwidth}
\centering
\includegraphics[width=\textwidth]{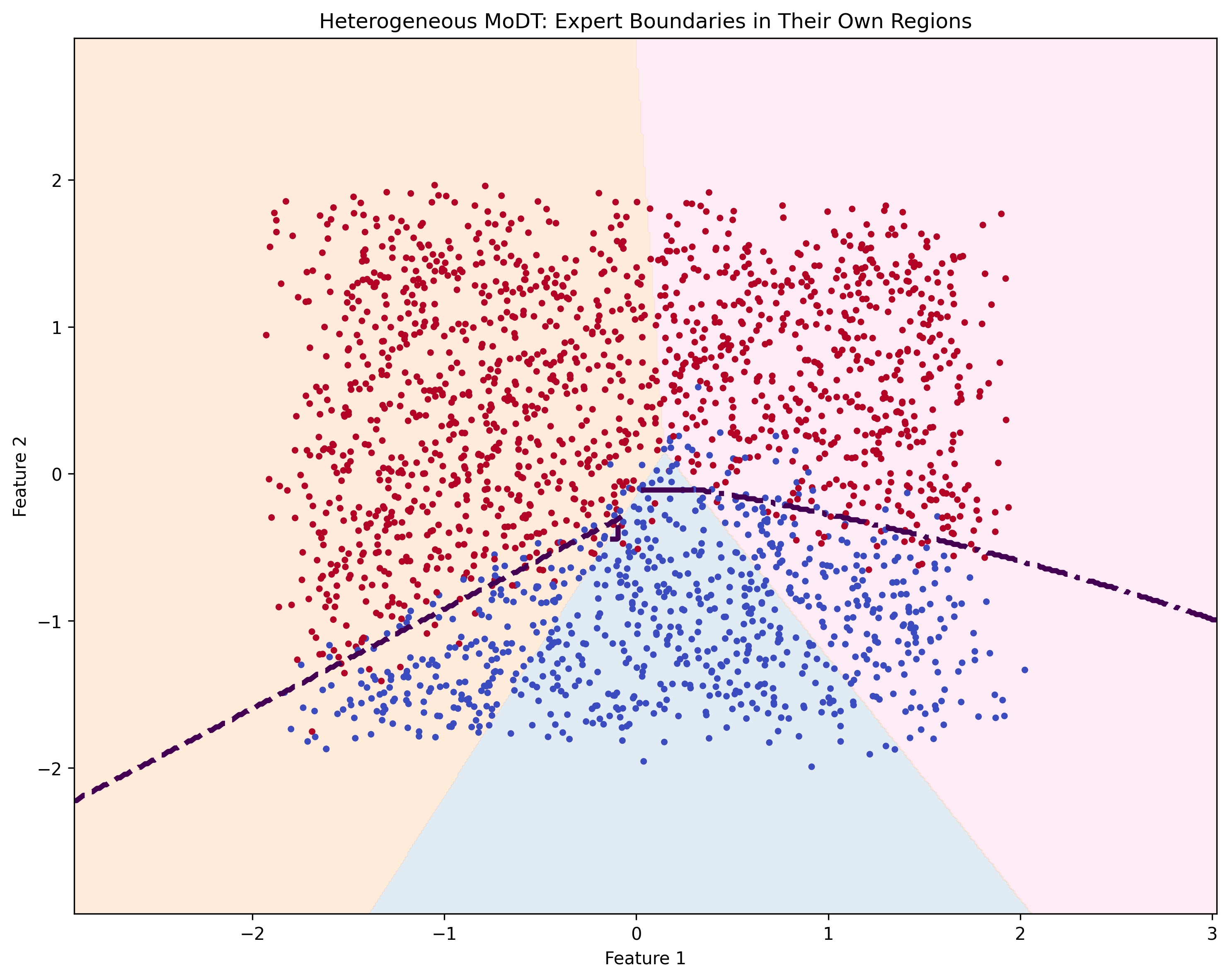}
\caption{Expert specialization learned by the heterogeneous Mixture Model.}
\end{subfigure}

\caption{
Comparison of decision boundaries and expert specialization on the \textbf{Piecewise Linear Kink} dataset. Subfigure (a) shows the ground-truth data-generating decision boundary consisting of two linear segments with opposite slopes that meet at a kink. Subfigure (b) presents the decision boundary learned by the  MoDT composed exclusively of decision-tree experts. Subfigure (c) visualizes the specialization behavior learned by the proposed heterogeneous mixture framework. The shaded background denotes the dominant expert selected by the gating mechanism, where beige corresponds to the linear SVM expert, pink corresponds to the QDA expert, and blue corresponds to the decision-tree expert. Red points denote observations from class~1, while blue points denote observations from class~0. Expert decision boundaries are displayed only within the regions where the corresponding expert receives dominant gating responsibility.
}
\label{fig:kink_specialization}
\end{figure*}

\begin{figure*}
\centering
\begin{subfigure}[t]{0.43\textwidth}
\centering
\includegraphics[width=\textwidth]{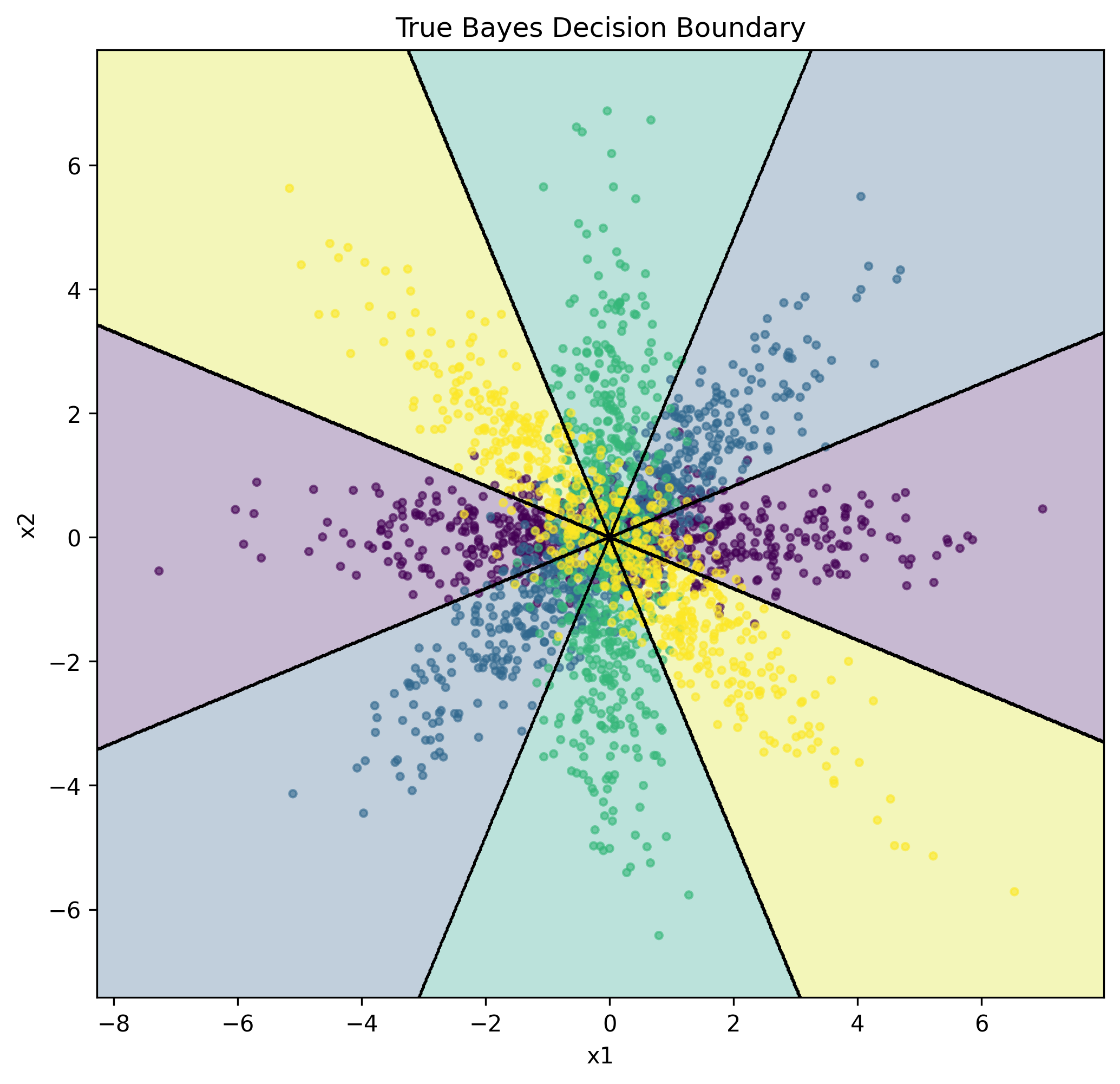}
\caption{Estimated Bayes-optimal boundary.}
\end{subfigure}
\hfill
\begin{subfigure}[t]{0.43\textwidth}
\centering
\includegraphics[width=\textwidth]{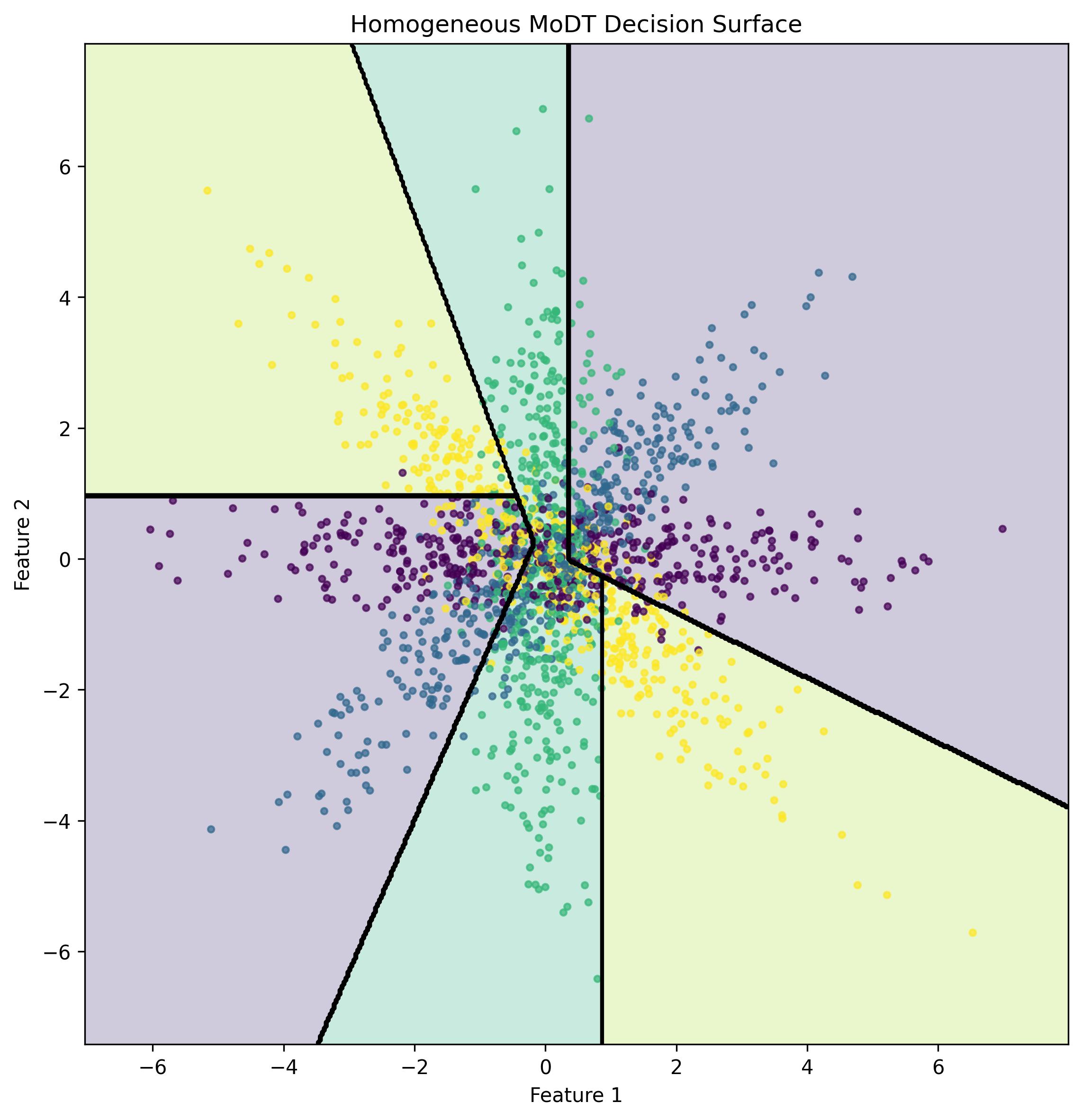}
\caption{Homogeneous MoDT decision boundary.}
\end{subfigure}

\vspace{0.5em}

\begin{subfigure}[t]{0.52\textwidth}
\centering
\includegraphics[width=\textwidth]{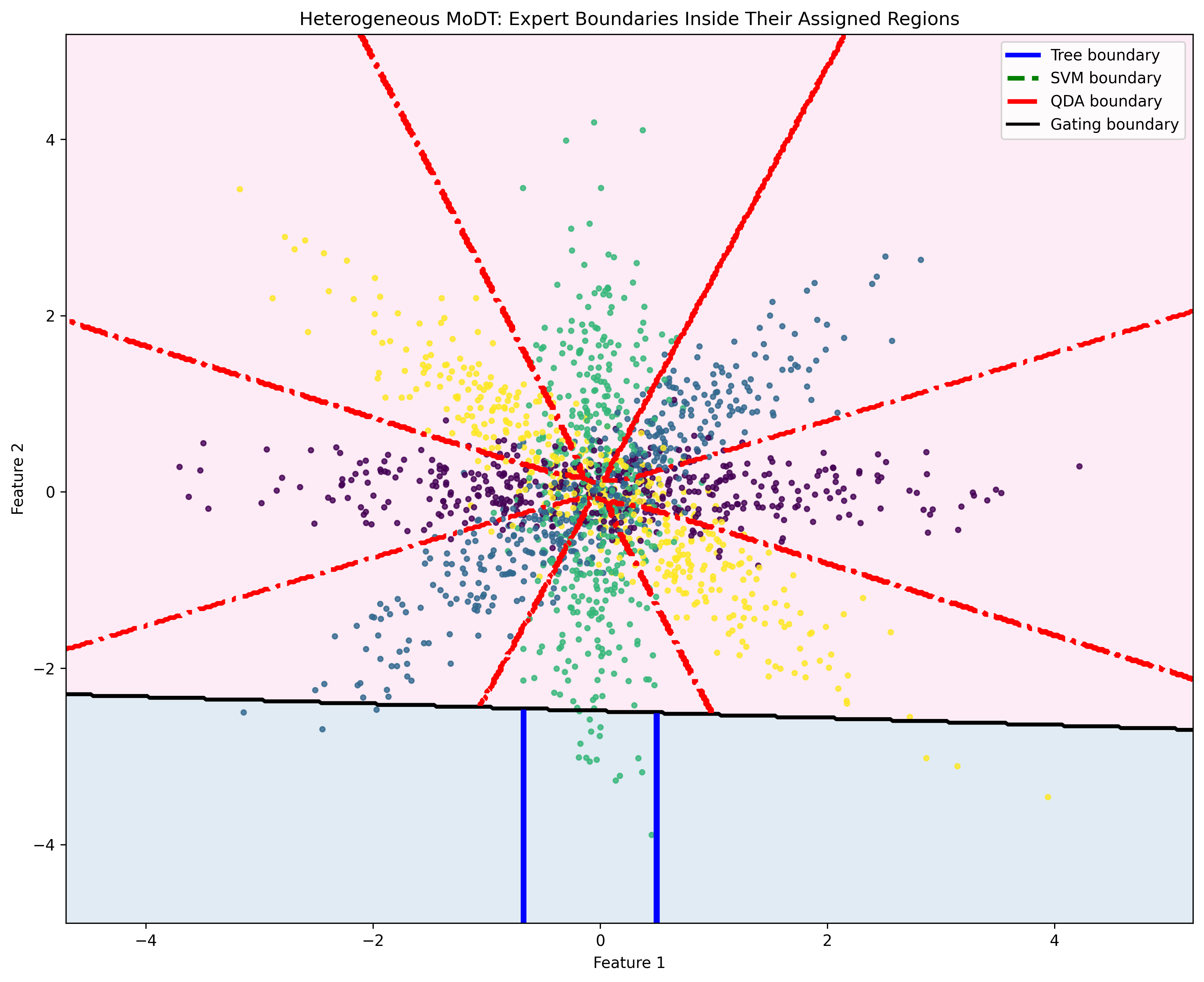}
\caption{Expert specialization learned by the heterogeneous Mixture Model.}
\end{subfigure}

\caption{
Comparison of decision boundaries and expert specialization on the \textbf{Rotated Gaussian Slabs} dataset. Subfigure (a) shows the estimated Bayes-optimal decision boundary obtained using quadratic discriminant analysis. Subfigure (b) presents the decision boundary learned by the MoDT composed exclusively of decision-tree experts. Subfigure (c) visualizes the specialization behavior learned by the proposed heterogeneous mixture framework. The shaded background denotes the dominant expert selected by the gating mechanism, where pink corresponds to the QDA expert, and blue corresponds to the decision-tree expert. Different colour of points denotes observations from different classes. Expert decision boundaries are displayed only within the regions where the corresponding expert receives dominant gating responsibility.
}
\label{fig:AG_specialization}
\end{figure*}

\paragraph{Qualitative Analysis of Decision Boundaries and Expert Specialization.}

Figures~\ref{fig:kink_specialization} and~\ref{fig:AG_specialization} compare the ground-truth (or Bayes-optimal) decision boundaries, the decision boundaries learned by the MoDT, and the specialization patterns learned by the proposed heterogeneous Mixture of Experts framework. For the Piecewise Linear Kink dataset, the ground-truth boundary consists of two linear segments with opposite slopes that meet at a kink, creating a decision structure that cannot be represented by a single global linear classifier. The MoDT successfully captures the presence of the kink but approximates the underlying geometry through a collection of piecewise axis-aligned partitions induced by tree experts. In contrast, the heterogeneous Mixture produces a more interpretable decomposition of the feature space by assigning the left branch primarily to the SVM expert, the right branch to the QDA expert, and a lower-central region to the decision tree expert. Consequently, each expert contributes a meaningful local decision boundary that reflects its underlying inductive bias, yielding a decomposition that more closely aligns with the geometry of the data-generating process.

A different specialization pattern emerges on the Multiclass Anisotropic
Gaussian dataset, whose Bayes-optimal classifier is effectively linear.
Although a linear support vector machine is also capable of representing such
a decision boundary, the heterogeneous mixture assigns the overwhelming
majority of the posterior responsibility to the QDA expert. This behaviour is
consistent with the underlying data-generating process. The class-conditional
distributions possess approximately equal covariance structures, under which
the quadratic discriminant function reduces to linear discriminant
analysis (LDA), yielding an essentially linear decision boundary. Consequently,
the QDA expert naturally recovers the optimal decision rule while retaining the
ability to model more general quadratic boundaries if required. The gating
mechanism therefore has little incentive to allocate substantial responsibility
to the linear SVM expert.

In contrast, the original MoDT is restricted to decision tree experts and must
approximate the underlying linear boundary through a collection of axis-aligned
partitions. Although the gating mechanism combines multiple tree experts, the
resulting piecewise approximation cannot recover the globally linear Bayes
decision boundary as faithfully, leading to comparatively lower predictive
performance.

Since the average expert usage assigned to the QDA expert exceeds
90\%, we additionally evaluate the corresponding standalone QDA classifier.
Figure~\ref{fig:AG_specialization} presents both the Bayes-optimal (QDA)
decision boundary and the boundary learned by the proposed heterogeneous
mixture. Although the two boundaries are nearly identical, the comparison
serves a different purpose. It demonstrates that the gating mechanism
successfully identifies the statistically appropriate expert family and
effectively reduces the heterogeneous mixture to the correct local model.
Consequently, the figure illustrates not only accurate prediction but also the
ability of the proposed framework to perform meaningful expert specialization
through adaptive model selection.

These observations highlight a principal advantage of the proposed framework.
Rather than enforcing a homogeneous hypothesis class across all experts, the
heterogeneous mixture is capable of selecting the inductive bias most
appropriate for different regions of the feature space. In this example, the
gating mechanism correctly recognises that a covariance-based discriminative
model provides the most faithful representation of the underlying
data-generating process, thereby recovering the Bayes-optimal decision boundary
while preserving the interpretability of the overall mixture architecture.
Taken together, these examples demonstrate that the proposed heterogeneous mixture framework does not enforce uniform participation among experts; rather, it adaptively selects and combines heterogeneous inductive biases according to the geometric and statistical characteristics of different regions of the feature space. The resulting decomposition improves interpretability while providing insight into how different expert classes collaborate to solve classification tasks of varying complexity.
\subsection{Application on Real-World Datasets}

\begin{table*}[!t]
  \centering
  \caption{Summary of real-world datasets.}
  \label{tab:dataset_summary1}
  \resizebox{\textwidth}{!}{%
    \begin{tabular}{lcccp{6cm}}
      \toprule
      \textbf{Dataset Name} & \textbf{\# Data Points} & \textbf{\# Classes}
        & \textbf{\# Features} & \textbf{Description} \\
      \midrule
      Adult         & 45{,}222 & 2 & 103 & Census-based dataset used for income prediction. \\
      GYM           & 2{,}000  & 2 & 4   & Reinforcement learning state-action classification dataset. \\
      Breast Cancer & 569      & 2 & 10  & Predict breast cancer based on properties extracted from images. \\
      Iris          & 150      & 3 & 4   & Classify iris flower species based on sepal and petal measurements. \\
      Wine (Chemical) & 178 & 3 & 13 & Classify wine cultivars using chemical composition measurements. \\
      Congressional Voting Records & 435 & 2 & 16 & Classify political party from congressional roll-call voting records. \\
      Connectionist Bench & 208 & 2 & 60 & Classify sonar signals as reflections from mines or rocks using acoustic measurements. \\
      Rice (Osmancik vs Cammeo) & 3810 & 2 & 7 & Distinguish Osmancik and Cammeo rice varieties using morphological characteristics extracted from grain images. \\
      \bottomrule
    \end{tabular}%
  }
\end{table*}

\begin{table*}[!t]
  \centering
  \caption{Comparison of models across real-world datasets (mean accuracy $\pm$ std).}
  \label{tab:model_comparison1}
  \resizebox{\textwidth}{!}{%
    \begin{tabular}{lcccccc}
      \toprule
      \multirow{2}{*}{Dataset}
        & \multirow{2}{*}{Random Forest}
        & \multicolumn{2}{c}{MoDT}
        & \multicolumn{2}{c}{Hetero-Mix Expert}
        & \multirow{2}{*}{Avg.\ Expert Usage (\%)} \\
      \cmidrule(lr){3-4}\cmidrule(lr){5-6}
        & & Depth 2 & Depth 3 & Depth 2 & Depth 3 & \\
      \midrule
      Iris          & $0.94\pm0.02$ & $0.93\pm0.03$ & $0.94\pm0.03$ & $\mathbf{0.95\pm0.03}$ & $0.93\pm0.02$ & DT 62.1, QDA 37.9 \\
      Breast Cancer & $\mathbf{0.94\pm0.02}$ & $0.92\pm0.02$ & $0.93\pm0.02$ & $0.91\pm0.02$ & $0.92\pm0.02$ & DT 52.5, SVM 0.1, QDA 47.4 \\
      Gym           & $\mathbf{0.94\pm0.01}$ & $\mathbf{0.94\pm0.01}$ & $0.93\pm0.02$ & $0.93\pm0.01$ & $\mathbf{0.94\pm0.01}$ & DT 100 \\
      Adult         & $0.789$       & $0.783$       & $0.849$       & $\mathbf{0.850}$       & $0.838$       & DT 7.9, SVM 90.2, QDA 1.9 \\
      Wine (Chemical) & $\mathbf{0.99\pm0.01}$ & $0.93\pm0.03$ & $0.92\pm0.03$ & $0.94\pm0.03$ & $0.94\pm0.02$ & DT 49.6, QDA 50.4 \\
      Congressional Voting Records & $\mathbf{0.96\pm0.01}$ & $0.94\pm0.01$ & $0.94\pm0.01$ & $0.95\pm0.01$ & $0.95\pm0.01$ & DT 47.5, SVM 2.1, QDA 50.3\\
      Connectionist  & $\mathbf{0.80\pm0.01}$ & $0.69\pm0.06$ & $0.72\pm0.06$  & $0.71\pm0.05$ & $0.73\pm0.09$ & DT 32.6, SVM 0.3, QDA 67.2 \\
      Rice (Osmancik vs Cammeo) & $0.92\pm0.01$ & $0.92\pm0.01$ & $0.92\pm0.01$ & $\mathbf{0.93\pm0.01}$ & $0.92\pm0.01$ & DT 42.2, SVM 23.6, QDA 34.2 \\
      
      \bottomrule
    \end{tabular}%
  }
\end{table*}

\begin{figure*}
  \centering
  \includegraphics[width=\textwidth]{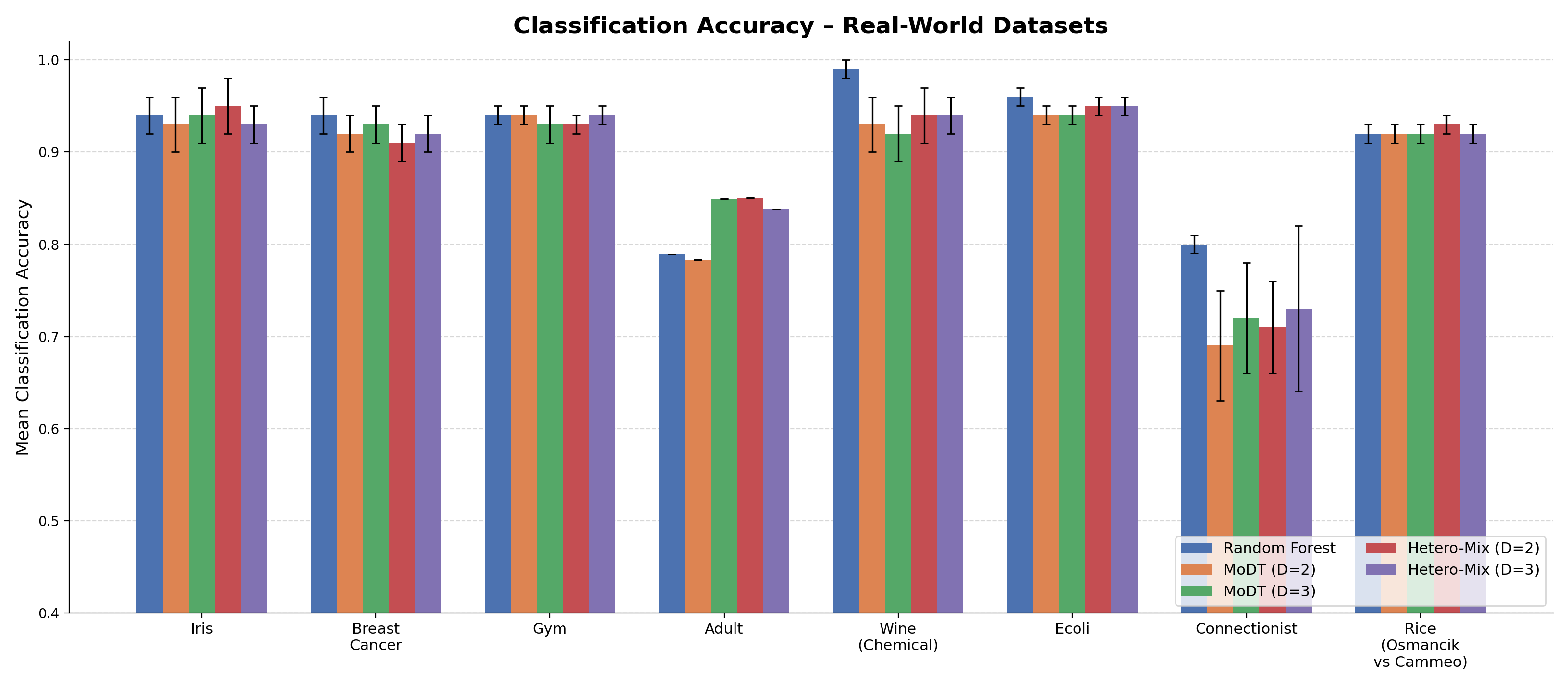}
  \caption{Mean classification accuracy on real-world datasets. Error bars indicate one
           standard deviation over ten random train-test splits. Results are shown for
           Random Forest, MoDT (depth~2 and~3), and the proposed Hetero-Mix (depth~2
           and~3).}
  \label{fig:realworld_accuracy}
\end{figure*}

\begin{figure*}
  \centering
  \includegraphics[width=\textwidth]{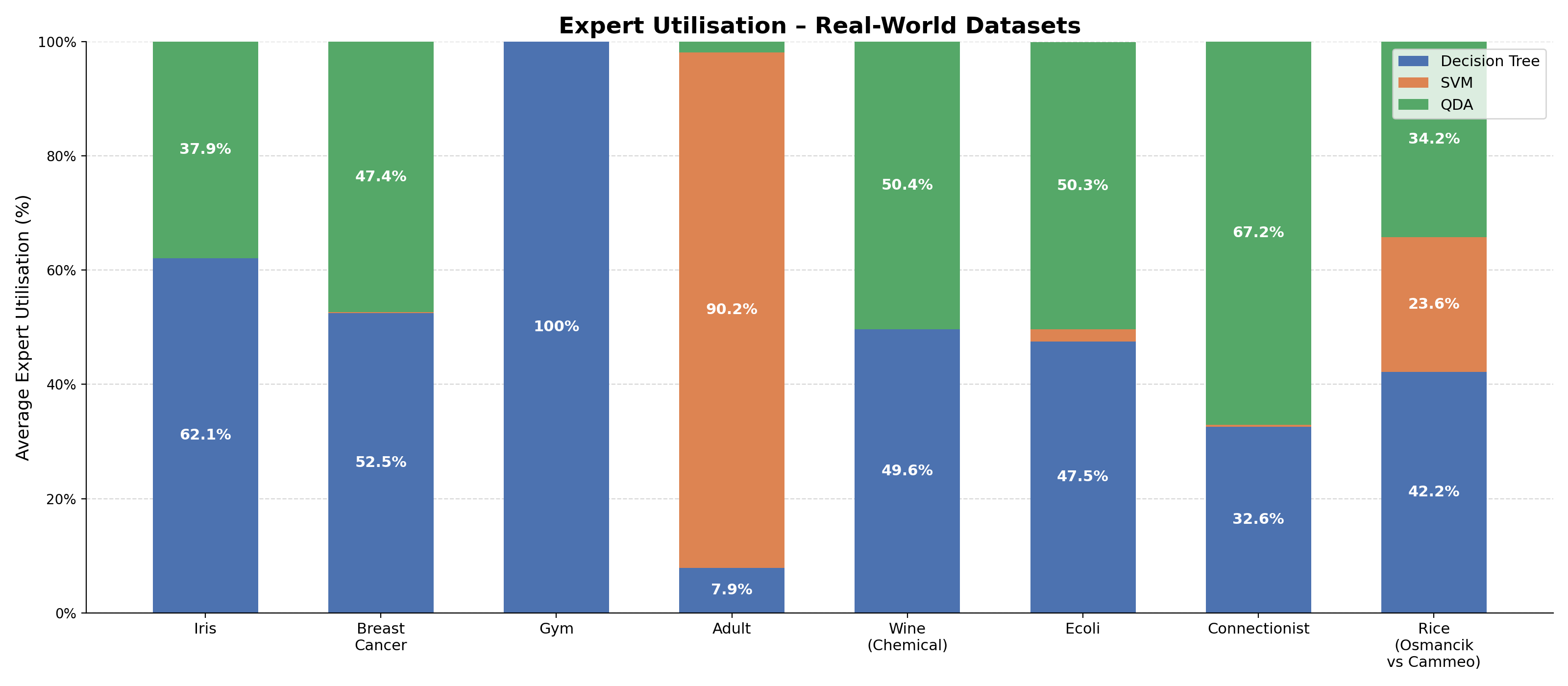}
  \caption{Average expert utilisation (\%) on real-world datasets for the proposed
           heterogeneous mixture model. Each stacked bar shows the proportion of samples
           assigned to the Decision Tree, SVM, and QDA experts by the gating mechanism,
           averaged over ten train-test splits.}
  \label{fig:realworld_experts}
\end{figure*}

To evaluate the proposed framework on practical classification problems, we
consider eight widely used benchmark datasets drawn from diverse application
domains. The Adult~\cite{adult_2}, Breast Cancer, Gym, and Iris~\cite{iris_53} datasets were obtained from
the publicly available implementation accompanying the original Mixture of
Decision Trees (MoDT) framework~\cite{bru}, while the Connectionist (Sonar)~\cite{connectionist_bench_sonar_mines_vs_rocks_151},
Wine~\cite{wine_109}, Congressional Voting Records, and Rice~\cite{rice_cammeo_and_osmancik_545} datasets were obtained from the UCI Machine Learning
Repository~\cite{uci,Dua2017UCI}. Table~\ref{tab:dataset_summary1} gives the descriptions of the datasets analyzed while
Table~\ref{tab:model_comparison1} and Figure~\ref{fig:realworld_accuracy} summarize the
classification performance of the proposed heterogeneous mixture framework on several
real-world benchmark datasets. Figure~\ref{fig:realworld_experts} illustrates the
corresponding expert utilisation patterns. In general, the
heterogeneous mixture model provides competitive performance compared to Random Forests and
the MoDT, while also providing interpretable expert specialization
behavior. On several datasets (e.g., Adult, Iris, Rice), the heterogeneous model performs as well
or slightly better than the MoDT architecture, which suggests that a mixture of
heterogeneous experts can lead to better generalization across datasets with different
underlying structures. Moreover, the expert utilization figures show interesting
specialization trends. For example, the Adult dataset is mostly dominated by the SVM
expert, indicating that there are roughly linearly separable regions in the feature space,
whereas datasets such as Gym show almost total dominance of a single expert,
indicating that some datasets are naturally adapted to particular expert structures.
Conversely, the more balanced expert utilization in datasets such as Rice
and Breast Cancer suggests that collaborative interaction among heterogeneous experts may
play a role in predictive performance in more complex classification scenarios.

\section{Conclusion and Future Research}
\label{sec:conc_fut_res}

This work develops Heterogeneous Mixture of Experts (Hetero-MoE) as a framework for pattern classification in which the predictive mechanism can adapt not only through its parameters and spatial allocation of observations, but also through the choice of the underlying hypothesis class. The distinction is fundamental. In a conventional homogeneous mixture, the gating mechanism determines which expert is emphasized, but the experts themselves remain members of the same model family. In Hetero-MoE, the local representation is allowed to change with the structure of the problem: a decision tree can describe an axis-aligned rule, a linear support vector machine can represent a locally hyperplanar separation, and quadratic discriminant analysis can capture covariance-dependent class structure. The resulting model therefore treats the selection of an inductive bias as part of the pattern-recognition problem itself. Rather than asking only how accurately a decision boundary can be approximated, the framework also asks which interpretable form of decision boundary is appropriate in a given region of the feature space.

The principal strength of the proposed approach is consequently its ability to expose a level of model adaptation that is usually hidden inside a single predictive architecture. A flexible homogeneous learner may approximate several geometries simultaneously, but the resulting representation does not necessarily reveal which modeling assumption is responsible for the prediction in a particular region. Hetero-MoE makes this distinction explicit. The gating mechanism determines the relative contribution of expert families, while the experts retain their own recognizable statistical structure. The resulting decomposition provides a natural bridge between mixture modeling, local model selection, and interpretable pattern recognition. In this sense, the proposed framework is not simply a larger ensemble of classifiers; it makes the choice of predictive representation itself an object of estimation.

The empirical results demonstrate both the usefulness and the limits of this additional flexibility. Hetero-MoE improves upon or matches homogeneous MoDT on several problems whose local geometry is heterogeneous or covariance-sensitive, while its expert-utilization patterns often correspond to recognizable properties of the underlying recognition problem. In the Piecewise Linear Kink experiment, for example, the simultaneous participation of decision-tree, SVM, and QDA experts provides a local decomposition of a globally heterogeneous boundary rather than an approximation based exclusively on axis-aligned partitions. Similar specialization patterns are observed in the real-world experiments: the Adult data are predominantly associated with the SVM expert, Gym is effectively represented by the decision-tree expert, and datasets such as Rice and Breast Cancer exhibit more balanced utilization across expert families. These results demonstrate that expert utilization can serve not only as an optimization outcome but also as a descriptive representation of the statistical structure identified by the fitted classifier.

At the same time, the experiments show that heterogeneity is not universally beneficial. There are datasets on which Random Forest or homogeneous MoDT remains competitive or superior to Hetero-MoE. This observation is an important part of the contribution rather than a weakness to be concealed. It indicates that introducing additional hypothesis classes does not automatically improve generalization; the value of heterogeneity depends on the relationship between the data geometry, the available expert families, the routing mechanism, and the amount of information available to estimate the resulting local models. Hetero-MoE should therefore be viewed as an additional modeling degree of freedom rather than as a universal replacement for homogeneous classifiers. The more informative question for practitioners becomes not simply `Which classifier should be used?'' but, when the data justify such a decomposition, `Which classifier should be used where?''

This positioning is complementary to recent developments in interpretable Pattern Recognition and mixture-of-experts modeling. Recent Pattern Recognition studies have considered intrinsically interpretable approaches based on prototypes, neural decision trees, and constrained or monotonic architectures~\cite{wen2025cdpnet,luo2024ncart,wargnier2025}, while recent MoE research has investigated intrinsic interpretability through structured expert architectures and routing~\cite{yang2025}. More recent work in \emph{Pattern Recognition} has also demonstrated the predictive value of heterogeneous neural experts combined through adaptive routing in fine-grained visual classification~\cite{yang2026fgmoe}. These approaches establish that specialization and routing remain active themes in contemporary pattern recognition, but they address settings that differ materially from the present one. In particular, Hetero-MoE combines heterogeneous, intrinsically interpretable statistical learners with distinct inductive biases inside a common probabilistic conditional model. Its principal distinction is therefore not that heterogeneous routing itself is new, but that \emph{heterogeneous classical hypothesis classes can be treated as explicitly selectable local recognition mechanisms while preserving probabilistic interpretability}.

Accordingly, the results should not be interpreted as establishing universal state-of-the-art predictive accuracy across contemporary classification methods. Such a claim is neither required by the proposed framework nor supported uniformly by the experiments. Rather, the empirical evidence shows that Hetero-MoE can remain competitive with strong predictive baselines while providing information that is structurally different from that supplied by black-box ensembles or homogeneous interpretable mixtures. A Random Forest can be highly effective without exposing a region-wise decomposition into distinct statistical mechanisms, while homogeneous MoDT retains transparency but restricts every expert to the same hypothesis class. Hetero-MoE occupies the intermediate space between these two objectives by allowing predictive competition across fundamentally different interpretable mechanisms and by exposing the resulting allocation through the gating probabilities and expert responsibilities.

For practitioners in pattern recognition, this additional information can be useful at several stages of model development. Expert utilization can provide a form of structural diagnosis: near-complete dominance by one expert may indicate that the data are adequately represented by a simpler homogeneous model, whereas persistent use of several experts may indicate heterogeneous local structure. The learned specialization can also guide subsequent model development by revealing which kinds of decision mechanisms appear to be relevant before a practitioner adopts a substantially more complex black-box architecture. Moreover, the use of recognizable expert families provides a natural language for communicating model behavior to domain experts. A prediction can be associated with a tree-based rule, a linear separation, or a covariance-driven discriminant mechanism rather than only with an opaque latent representation. Because the individual experts remain independently interpretable, local portions of the fitted decision function can also be inspected, audited, or replaced without requiring the entire predictive architecture to be discarded.

A related strength is that the proposed framework separates three forms of adaptation that are often conflated in predictive modeling: adaptation of model parameters, adaptation of the spatial allocation of observations, and adaptation of the hypothesis class itself. Classical homogeneous mixtures provide the first two forms, whereas Hetero-MoE introduces the third. This creates a conceptual link between model selection and mixture modeling: rather than choosing a single global hypothesis class before estimation, the model allows the data to determine which interpretable predictive paradigm should govern different parts of the feature space. The resulting framework therefore provides a mechanism for representing heterogeneity in the \emph{form} of a classifier rather than only in its fitted parameters.

The probabilistic formulation provides an additional strength. Expert responsibilities are obtained from the posterior distribution of the latent allocation variable rather than from an externally imposed similarity score or an independently defined heuristic confidence measure. Consequently, expert utilization has a direct probabilistic interpretation within the fitted conditional mixture. The calibration of the SVM outputs is important in this respect because it allows a margin-based classifier to participate in the same probabilistic mixture as the other experts. The generalized EM formulation consequently establishes a coherent connection between local prediction, posterior responsibility, and expert allocation. The accompanying monotone-ascent result for the gating auxiliary objective further shows that the regression-based gating correction is not merely a computational heuristic, but admits a formal generalized-EM interpretation under the stated learning-rate conditions.

The interpretability offered by Hetero-MoE should nevertheless be understood in a precise sense. The present study evaluates interpretability primarily through the intrinsic transparency of the expert families, explicit gating probabilities, expert-utilization statistics, and visualization of local decision boundaries. These components provide structural information that is unavailable from a purely black-box predictor, but they do not constitute a complete human-centred assessment of whether the resulting explanations are useful, faithful, stable, or cognitively efficient. Recent work has emphasized the need to evaluate explanation quality explicitly rather than infer it solely from model architecture~\cite{explanationstrength2025}. A useful continuation of the present work is therefore not to establish interpretability as a new problem, but to determine how well the specific explanation furnished by \emph{expert-family selection} communicates the behavior of a heterogeneous classifier to practitioners and domain experts.

The present work nevertheless has several limitations. First, the performance and specialization behavior of the framework depend on the collection of expert families made available to the gating mechanism. If an appropriate expert class is absent, the model may still experience local model mismatch despite its heterogeneous structure. Second, the framework does not guarantee balanced utilization of all experts. Indeed, for some datasets the gating mechanism naturally collapses toward a single dominant expert, effectively recovering a homogeneous model. While such behavior is often informative and may itself reveal the most suitable inductive bias for the dataset, it reduces the practical benefits of maintaining a larger heterogeneous architecture.

A second limitation concerns the expert routing mechanism. The proposed framework inherits the probabilistic softmax gating formulation from the classical mixture-of-experts architecture~\cite{jacobs1991,jordan1994} while preserving the interpretable routing mechanism employed in MoDT~\cite{bru}. Although this provides computational efficiency and a coherent probabilistic interpretation of expert responsibilities, a linear gating function may become restrictive when the true allocation boundaries are highly nonlinear, and likelihood-based responsibilities may not always yield the most informative or stable expert specialization. More flexible routing mechanisms are already well established in contemporary MoE systems; consequently, the unresolved question in the present setting is narrower: how can routing be made nonlinear or adaptive \emph{without losing the intrinsic interpretability of heterogeneous statistical experts}? In particular, tree-based, additive, or other structured gates could be investigated so that the routing boundaries themselves remain interpretable while accommodating substantially more complex allocation geometries.

The dependence on a prespecified expert dictionary is particularly consequential for the broader vision of adaptive pattern recognition. At present, the framework can choose among the models supplied by the practitioner, but it cannot discover an entirely new hypothesis class when none of the available experts is appropriate. While model-selection procedures for the number of experts and other aspects of MoE architecture have been studied in the broader literature, the unresolved question here is more specific: how should a heterogeneous conditional mixture choose \emph{which statistical learning paradigms} should be present in its expert dictionary when those paradigms have different parameterizations, dimensions, optimization procedures, and interpretability properties? This problem is distinct from simply selecting the number of experts. A future model could, for example, jointly determine whether a decision-tree, margin-based, discriminant, additive, or prototype-based expert is warranted and whether its inclusion provides sufficient predictive benefit to justify the additional complexity. Developing such a parsimonious expert-selection mechanism would turn the current practitioner-specified dictionary into a more fully data-adaptive component of the model.

The present work also leaves several important theoretical questions unanswered. Existing work has established substantial asymptotic theory for conventional mixtures-of-experts, including nonstandard behavior associated with latent expert allocation and the estimation of the number of experts~\cite{jiang2011asymptotic}. Recent Bayesian developments have further studied posterior contraction, parameter estimation, identifiability, and model selection for softmax-gated mixtures-of-experts~\cite{bariletto2026}. These results substantially reduce the scope of what can reasonably be described as theoretically unexplored in MoE methodology. The unresolved issue for Hetero-MoE is instead the extension of such theory to a conditional mixture whose experts belong to fundamentally different statistical families. In the present model, the components may have different parameter dimensions, different objective functions, different regularity properties, and different notions of local complexity. It therefore remains to be determined which forms of identifiability, consistency, asymptotic normality, convergence rate, and model-selection guarantees continue to hold when heterogeneous interpretable experts coexist within the same conditional mixture. Establishing such results would provide a theoretical foundation specifically for heterogeneous statistical mixtures rather than for mixtures whose components are instances of a common parametric family.

The empirical results also reveal an interesting phenomenon that deserves further investigation. Although the proposed heterogeneous framework frequently outperforms or matches the homogeneous MoDT architecture~\cite{bru}, there also exist datasets for which the original homogeneous model achieves superior predictive performance. This observation suggests that heterogeneity is not universally advantageous and raises a more specific unresolved question concerning the relationship between data geometry, expert diversity, and model complexity. General MoE research already studies expert diversity, routing balance, and specialization; the question that remains open in the present setting is whether one can characterize, in terms of properties of the underlying classification problem, when the additional approximation flexibility obtained by heterogeneous hypothesis classes outweighs the additional estimation and routing complexity. Such a characterization would provide a principled basis for deciding when Hetero-MoE should be preferred to a homogeneous MoDT rather than relying on empirical trial alone.

A particularly promising empirical direction is therefore to move from asking whether Hetero-MoE performs well on a collection of datasets to asking \emph{which measurable properties of a dataset predict the value of heterogeneous modeling}. The current experiments already show substantial variation in expert utilization. A systematic follow-up could relate the gain of Hetero-MoE over homogeneous MoDT to quantities such as utilization entropy, the spatial separation of expert responsibilities, the complexity of local decision boundaries, and measures of local class-conditional geometry. The purpose would not be simply to produce another descriptive visualization, but to determine whether these quantities can serve as empirical indicators of the conditions under which heterogeneous modeling is warranted. Such an analysis would directly operationalize the question of \emph{when heterogeneity is beneficial}, which remains more specific to the present framework than the broader questions of expert diversity or routing optimization studied in contemporary MoE research.

The possibility of expert collapse also merits a more nuanced interpretation. In many modern MoE systems, severe imbalance in expert utilization is treated as a practical routing problem because it can produce inefficient computation or poorly trained experts. In Hetero-MoE, however, concentration on a single expert can have a second interpretation: it may indicate that the data provide little evidence for multiple inductive biases. The unresolved problem is therefore not simply how to force more balanced utilization, but how to distinguish undesirable optimization-induced collapse from statistically meaningful evidence that a homogeneous model is adequate. A principled criterion for making this distinction could allow expert collapse to become a model-simplification signal rather than merely a failure mode. This would also provide a natural connection between routing, model selection, and interpretability.

The framework can be expanded beyond the specific collection of experts considered in this study. The same architecture can naturally incorporate generalized additive models, sparse linear models, probabilistic graphical models, Gaussian process experts, robust classifiers, prototype-based learners, or other interpretable statistical procedures. The important unresolved problem is not whether such models can be inserted mechanically into the mixture, but how their differing notions of complexity, regularization, uncertainty, and interpretability should be compared on a common basis during joint estimation. In particular, future work could investigate expert libraries tailored to particular Pattern Recognition domains and develop criteria by which additional expert families are admitted only when they offer a demonstrable and interpretable improvement over the existing dictionary.

A second major direction is therefore the development of more flexible but still interpretable gating mechanisms. Although the current linear softmax gate is attractive for its simplicity, computational efficiency, and transparent parameterization, it may be insufficient when the regions in which different experts are appropriate are themselves highly nonlinear. Existing MoE literature already contains many powerful nonlinear and spatially adaptive routing mechanisms~\cite{yang2026fgmoe}; the more specific question for Hetero-MoE is how to obtain comparable routing flexibility while retaining an explicit statistical interpretation of both the gate and the expert assignment. Tree-based gates, sparse additive gates, piecewise-linear gates, or other structured routing functions are therefore natural candidates. In parallel, the routing objective could be extended to incorporate predictive uncertainty, calibration, expert complementarity, and stability. The unresolved issue is how these criteria should be combined without allowing routing to become so complex that the principal interpretability benefit of the heterogeneous architecture is lost.

The theoretical development of such extensions would also be valuable. The monotone-ascent result established here concerns the gating auxiliary objective and provides an optimization guarantee under specified learning-rate conditions; it does not establish global convergence or statistical optimality of the complete heterogeneous estimator. A more comprehensive theory would ideally connect the geometry of the expert classes to approximation error, estimation error, routing complexity, and identifiability, and would characterize the conditions under which heterogeneous mixtures offer a genuine statistical advantage over homogeneous mixtures. In particular, it remains to be determined whether one can formulate oracle inequalities, risk comparisons, or data-dependent selection criteria that quantify when the gain from allowing multiple inductive biases justifies the additional complexity of the heterogeneous architecture.

Finally, the broader significance of the proposed framework extends beyond the specific collection of experts considered in this work. The central idea underlying mixture-of-experts models~\cite{jacobs1991,jordan1994} is that different regions of the feature space may require different predictive models. The present work extends this philosophy by allowing those local models to originate from fundamentally different statistical learning paradigms rather than a common hypothesis class. Consequently, the proposed methodology is not restricted to decision trees, linear support vector machines, and quadratic discriminant analysis. The same framework can naturally incorporate generalized additive models, sparse linear models, probabilistic graphical models, Gaussian process experts, robust classifiers, or other interpretable statistical learning procedures. We therefore view the proposed heterogeneous mixture-of-experts framework not merely as an extension of MoDT, but as a general methodology for combining heterogeneous statistical models within a unified probabilistic architecture. The central research opportunity is now to understand the statistical principles governing such heterogeneous specialization, to determine when it is beneficial, and to develop routing and selection mechanisms that preserve interpretability while admitting greater structural flexibility.

Taken together, the contribution of this work is broader than the particular empirical gains obtained by combining three expert families. The deeper contribution is to make \emph{model heterogeneity itself an interpretable object of learning}. Instead of treating the choice between a tree, a linear classifier, or a covariance-based discriminant model as a decision that must be made globally before fitting the classifier, Hetero-MoE allows that choice to become spatially adaptive and probabilistically represented. This creates a model in which prediction, representation, and local model selection are learned jointly. For pattern-recognition practitioners, this provides a practical mechanism for exploiting heterogeneous local geometry without abandoning interpretable predictive mechanisms; for researchers, it motivates a more specific set of unanswered questions concerning the statistical value, identifiability, selection, and interpretation of heterogeneous hypothesis classes within a common conditional mixture. In this sense, the significance of Hetero-MoE lies not in claiming that heterogeneous routing is new, but in demonstrating a concrete and interpretable formulation in which \emph{the inductive bias itself can become a spatially adaptive component of a probabilistic pattern-recognition model}.

\section*{Code Availability}

The official implementation of the proposed framework, including the source code, datasets, and experimental scripts, is publicly available at \href{https://github.com/ChatterjeeSoham/heterogenous-mixture-of-experts}
{GitHub Repository}

\bibliographystyle{elsarticle-num}
\bibliography{cas-refs-citation-order}

\end{document}